\documentclass[11pt]{article}

\usepackage[final]{acl}

\usepackage{times}
\usepackage{latexsym}

\usepackage[T1]{fontenc}

\usepackage[utf8]{inputenc}

\usepackage{microtype}

\usepackage{inconsolata}

\usepackage{times}
\usepackage[T1]{fontenc}
\usepackage[utf8]{inputenc}
\usepackage{microtype}
\usepackage{graphicx}
\usepackage{url}
\usepackage{cite}
\usepackage{amsmath,amssymb,amsthm}
\usepackage{booktabs}
\usepackage{subcaption}
\usepackage{bm}
\usepackage{textcomp}
\usepackage{stfloats}

\usepackage{verbatim}
\usepackage{algorithm}
\usepackage{algorithmic}
\newtheorem{definition}{Definition}
\newtheorem{lemma}{Lemma}
\newtheorem{theorem}{Theorem}
\usepackage{pifont}    
\usepackage{multirow}   
\usepackage{subcaption} 
\usepackage{setspace}

\usepackage[utf8]{inputenc}
\usepackage{tcolorbox}
\usepackage{algorithmic}
\usepackage{setspace}
\usepackage{pifont}   
\usepackage{newfloat}
\usepackage{listings}
\DeclareCaptionStyle{ruled}{labelfont=normalfont,labelsep=colon,strut=off} 
\floatstyle{ruled}
\newfloat{listing}{tb}{lst}{}
\floatname{listing}{Listing}
\theoremstyle{plain}

\title{Estimating the Black-box LLM Uncertainty with Distribution-Aligned Adversarial Distillation}

\author{
  \textbf{Huizi Cui\textsuperscript{1}},
  \textbf{Huan Ma\textsuperscript{1}},
  \textbf{Qilin Wang\textsuperscript{2}},
  \textbf{Yuhang Gao\textsuperscript{3}},
  \textbf{Changqing Zhang\textsuperscript{4}\thanks{~Corresponding author.}}
  \\
  \\
  \textsuperscript{1} School of Computer Science and Technology, Tianjin University, China \\
  \textsuperscript{2} School of Future Technology, Tianjin University, China \\
  \textsuperscript{3} Georgia Tech Shenzhen Institute, Tianjin University, China \\
  \textsuperscript{4} School of Artificial Intelligence, Tianjin University, China \\
 \texttt{\{huizicui, mahuan520, wangqilin, yuhang\_gao, zhangchangqing\}@tju.edu.cn} %
}

\begin{document}
\maketitle
\begin{abstract}
Large language models (LLMs) have progressed rapidly in complex reasoning and question answering, yet LLM hallucination remains a central bottleneck that hinders practical deployment, especially for commercial black-box LLMs accessible only via APIs. Existing uncertainty quantification methods typically depend on computationally expensive multiple sampling or internal parameters, which prevents real-time estimation and fails to capture information implicit in the black-box reasoning process. To address this issue, we propose Distribution-Aligned Adversarial Distillation (DisAAD), which introduces a generation-discrimination architecture to guide a lightweight proxy model to learn the high-quality regions of the output distribution of the black-box LLM, thus effectively endowing it with the ability to ``know whether the black-box LLM knows or not''. Subsequently, we use the proxy model to reproduce the specific responses of the black-box LLM and estimate the corresponding uncertainty based on evidence learning. Extensive experiments have verified the effectiveness and promise of our proposed method, indicating that a proxy model even one that only accounts for $1\%$ of the target LLM’s size can achieve reliable uncertainty quantification. Our model and related resources are released at \url{https://github.com/huizi-Cui/DisAAD}.
\end{abstract}

\section{Introduction}

Large language models (LLMs) have made significant progress in recent years, demonstrating outstanding performance in complex reasoning and text generation tasks \citep{kadavath2022language,rawte2023survey,zhang-etal-2025-law}. Despite these remarkable achievements, LLMs are prone to generate seemingly reasonable responses with non-factual or unfaithful information, a phenomenon widely known as LLM hallucinations \citep{shah2024accuracy,banerjee2024llms,tonmoy2024comprehensive}. Moreover, recent works have further indicated that larger and more instructive LLMs usually tend to deceive by pretending to understand, creating a false sense of confidence that makes users easily believe their responses \citep{abbasi2024believe,zhou2024larger,huang2025survey}. Therefore, hallucination issues present a significant barrier to the widespread application of LLMs, particularly in safety-critical fields \citep{chen2025enhancing,perkovic2024hallucinations}.

Uncertainty quantification has emerged as a promising way to mitigate the limitations arising from hallucinations by enabling LLMs to express doubt when generating potentially unreliable responses \citep{huang2024survey,zhang2023enhancing}. High uncertainty indicates that users need to be cautious, since the LLM may be influenced by hallucinations and offer unreliable responses. Depending on computational cost, existing uncertainty quantification methods can be broadly categorized into self-evaluation methods, multi-sample methods and single-sample methods \citep{xiong2024efficient}.

Self-evaluation methods allow an LLM to assess the confidence of its own generated response through internal mechanisms or with the help of additional advanced models \citep{kadavath2022language, kapoor2024large}. However, these methods often fail to produce credible estimation results, and some specific fine-tuning interventions are necessary. Multi-sample methods perceive the diversity within the possible answer space from multiple reference calls, leveraging statistical patterns across generations to identify areas where the model exhibits hesitation or inconsistency \citep{lakshminarayanan2017simple, farquhar2024detecting}. For example, Semantic Entropy quantifies uncertainty in LLMs by measuring consistency across multiple responses generated for the same prompt, and further identifies those inconsistent outputs as potentially unreliable information sources \citep{farquhar2024detecting}. Although multi-sample methods are theoretically well-founded, they face several significant issues: (1) fail to estimate the uncertainty of single response; (2) inefficient in practical applications due to the need for multiple sampling iterations; (3) miss inherent uncertainty when models consistently generate incorrect answers due to knowledge gaps.

In view of the above issues, single-sample methods are developed to estimate the uncertainty of individual sentences by accessing the internal information derived from the LLM (e.g., the next token probability distribution) to estimate the real-time uncertainty \citep{fadeeva2024fact}. LogTokU is a representative method for quantifying token-level uncertainty by treating logits as parameters of the Dirichlet distribution \citep{ma2025estimating}. It provides mathematical evidence that logits offer more accurate uncertainty representations than maximum probability or entropy measurements. However, these methods are not applicable to closed-source LLMs such as GPT-4 and Claude-3, which still dominate in current practical applications \citep{sriramanan2024llm}. Since these LLMs do not provide complete access to their internal mechanisms and parameter states, a fundamental issue arises: \textbf{How well can we predict the real-time uncertainty of black-box LLM only based on the single response?}

Recent research demonstrates that small LLMs often refuse to provide answers to difficult issues, reflecting a better awareness of their knowledge limitations. In contrast, larger and more instructive LLMs (such as GPT-4) tend to give seemingly reasonable but actually incorrect responses more frequently, making them easily overlooked by users \citep{zhou2024larger,steyvers2025large}. Since simple LLMs are more reliable, a natural idea emerges: estimating the uncertainty of black-box LLM by leveraging a smaller LLM. To achieve this, we propose a novel Distribution-Aligned Adversarial Distillation (DisAAD), which introduces a small proxy model to learn how to ``know whether the black-box LLM knows or not'' and enables it to guide uncertainty quantification for the target LLM in downstream tasks. Specifically, we first systematically collect the outputs from the target black-box LLM across diverse prompts and create a comprehensive distillation dataset \citep{zeng2024dald}. Then, the proxy model is specifically optimized within a generation-discrimination architecture to approximate the high-probability regions of the target output distribution. Benefiting from adversarial distillation, we further utilize the distilled proxy model to reproduce the responses of the black-box LLM and estimate real-time uncertainty via evidential deep learning \citep{sensoy2018evidential}. Extensive experiments verify the effectiveness of the proposed method in various question-answer tasks, a distilled proxy model even with only $1\%$ of the target model size can achieve superior response reliability estimation performance. 

The main contributions of our work are summarized as follows: (1) We propose a new paradigm for estimating the uncertainty of black-box LLMs, which not only eliminates the need for accessing model states but also obviates the requirement for multiple response sampling. (2) We propose a new method that enables proxy models to approximate the high-probability regions of the target output distribution, thereby characterizing the uncertainty of black-box LLMs. (3) Through extensive experiments and theoretical analysis, we validate the effectiveness of our proposed method in the detection of hallucinations of LLM, outperforming the strongest baselines in black-box setting with an average improvement of $18.2\%$ in AUROC and $22.9\%$ in AUPR.

\section{Related work}

\subsection{Multi-sample Methods} 

Multi-sample methods evaluate uncertainty by measuring semantic consistency across multiple responses to the same prompt. For instance, Semantic Entropy computes the entropy over a distribution of semantic clusters formed by grouping the sampled responses based on semantic equivalence \citep{farquhar2024detecting}. EigV quantifies uncertainty using a graph-based calculation to estimate how many distinct groups of similar answers exist, which allows it to effectively identify different semantic clusters \citep{lin2023generating}. Furthermore, recent works advance this by integrating the model's internal confidence. CoCoA calculates LLM uncertainty by multiplying the model's confidence in a specific response with the average semantic inconsistency compared to other samples \citep{vashurin2025uncertainty}. Similarly, SAR creates a hybrid measure by combining sentence-level semantic relevance with token-level probability adjustments for a more fine-grained balance of uncertainty \citep{duan2024shifting}.

\subsection{Single-sample Methods} 

Single-sample methods usually achieve LLM uncertainty quantification by leveraging token probabilities, logits or hidden layer activations without requiring additional sampling. The related works include perplexity, negative sequence probability, and mean token entropy \citep{fomicheva2020unsupervised}, along with more advanced techniques that account for the semantic importance. For example, CCP isolates factual uncertainty by analyzing the semantic relationships within the candidate token distribution at each step \citep{fadeeva2024fact}. Focus uses a proxy model to re-weight token-level uncertainty based on semantic properties like keyword importance and entity type \citep{zhang2023enhancing}. Recent research suggests that logits provide more direct insight into model confidence than normalized probabilities, leading to approaches like LogTokU \citep{ma2025estimating}, which treats logits as Dirichlet distribution parameters \citep{abdar2021review}.

\section{Method}

\subsection{Notations}

Given a white-box LLM $\mathcal{M}_\text{W}$ with a vocabulary $\mathcal{V}$, we formalize the next-token prediction process. An input prompt is tokenized as sequence $\boldsymbol{x} = (x_1, \dots, x_L)$. The model autoregressively generates a response $\boldsymbol{y} = (y_1, \dots, y_T)$. At each step $t$, $\mathcal{M}_\text{W}$ processes the context, which comprises the prompt $\boldsymbol{x}$ and previously generated tokens $\boldsymbol{y}_{<t} = (y_1, \dots, y_{t-1})$ to produce logit vector $\boldsymbol{z}_t \in \mathbb{R}^{|\mathcal{V}|}$. The probability of generating token $v_k \in \mathcal{V}$ as the next token $y_t$ is:

\begin{equation} \label{eq21}
P(y_t = v_k | \boldsymbol{x}, \boldsymbol{y}_{<t}; \mathcal{M}_\text{W}) = \frac{\exp(\boldsymbol{z}_{t,k})}{\sum_{j=1}^{|\mathcal{V}|} \exp(\boldsymbol{z}_{t,j})},
\end{equation}

\noindent{where $\boldsymbol{z}_{t,k}$ represents the $k$-th element of the logit vector $\boldsymbol{z}_t$, corresponding to the token $v_k$. The next token $y_t$ would be sampled from the distribution $y_t \sim P(\cdot | \boldsymbol{x}, \boldsymbol{y}_{<t}; \mathcal{M}_\text{W})$. } 

The uncertainty of the white-box LLM $\mathcal{M}_\text{W}$ can be directly evaluated according to the intermediate outputs including probability distribution, logits, etc. In contrast, we consider black-box LLM settings, which are increasingly prevalent in real-world applications. For a given input prompt $\boldsymbol{x}$, the black-box LLM $\mathcal{M}_\text{B}$ simply returns a final response sequence $\boldsymbol{y}_\text{B}$. The objective of our work is to quantify real-time uncertainty by only relying on the single input-response pair $(\boldsymbol{x}, \boldsymbol{y}_\text{B})$ derived from $\mathcal{M}_\text{B}$.

\subsection{Distribution-Aligned Adversarial Distillation}

As proven by recent work, larger and more instructive LLMs like GPT-4 often exhibit overconfidence, while smaller models tend to be better calibrated. Motivated by this, we propose to estimate the black-box uncertainty by leveraging the specifically optimized proxy model with small size. The proposed work is a two-stage method for black-box uncertainty quantification, where the first stage is distribution-aligned adversarial distillation for obtaining the proxy model, and the second stage is proxy-guided LLM uncertainty quantification based on evidential deep learning. 

\subsubsection{Distillation Data Collection}

To align the distribution of our proxy model $\mathcal{M}_\text{p}$ and the target model $\mathcal{M}_\text{B}$, we first construct a small distillation dataset $\mathcal{D}_\text{distill}$. This process begins by creating a diverse set of prompts $\{\boldsymbol{x}^{(i)}\}_{i=1}^N$, collated from both large-scale conversational dataset (open-domain) and task-specific evaluation dataset (in-domain). For each prompt $\boldsymbol{x}^{(i)}$, we query $\mathcal{M}_\text{B}$ multiple times to generate a candidate pool of responses $D_\text{B}^{(i)}$. It constitutes an empirical sampling of the model's true conditional output distribution $P_\text{B}(\boldsymbol{y}|\boldsymbol{x}^{(i)})$, which is often characterized by a long-tailed nature. To ensure that the distillation dataset represents the most characteristic outputs of the target LLM, we select the top $M$ responses from each $D_\text{B}^{(i)}$ that exhibit the highest mutual semantic consistency. This strategy effectively isolates the high-probability regions of the output distribution, and the selected prompt-response pairs $\{ (\boldsymbol{x}^{(i)}, \boldsymbol{y}_\text{B}^{(i,j)}) \}_{i=1, j=1}^{N,M}$ form our final distillation dataset $\mathcal{D}_\text{distill}$.

\subsubsection{Proxy Model Training}

Based on the collected distillation dataset $\mathcal{D}_\text{distill}$, we establish an adversarial training between the proxy model (generator) and the discriminator. As shown in Fig. \ref{flow1}, the proxy model learns to generate the responses consistent with those of the target LLM, while the discriminator learns to distinguish the responses derived from the proxy model and black-box LLM. Our goal is to optimize the proxy model until its generated responses are statistically indistinguishable from those of the target LLM, reaching the point where the discriminator cannot distinguish them.

\begin{figure*}[t]
\centering
\includegraphics[width=0.95\textwidth]{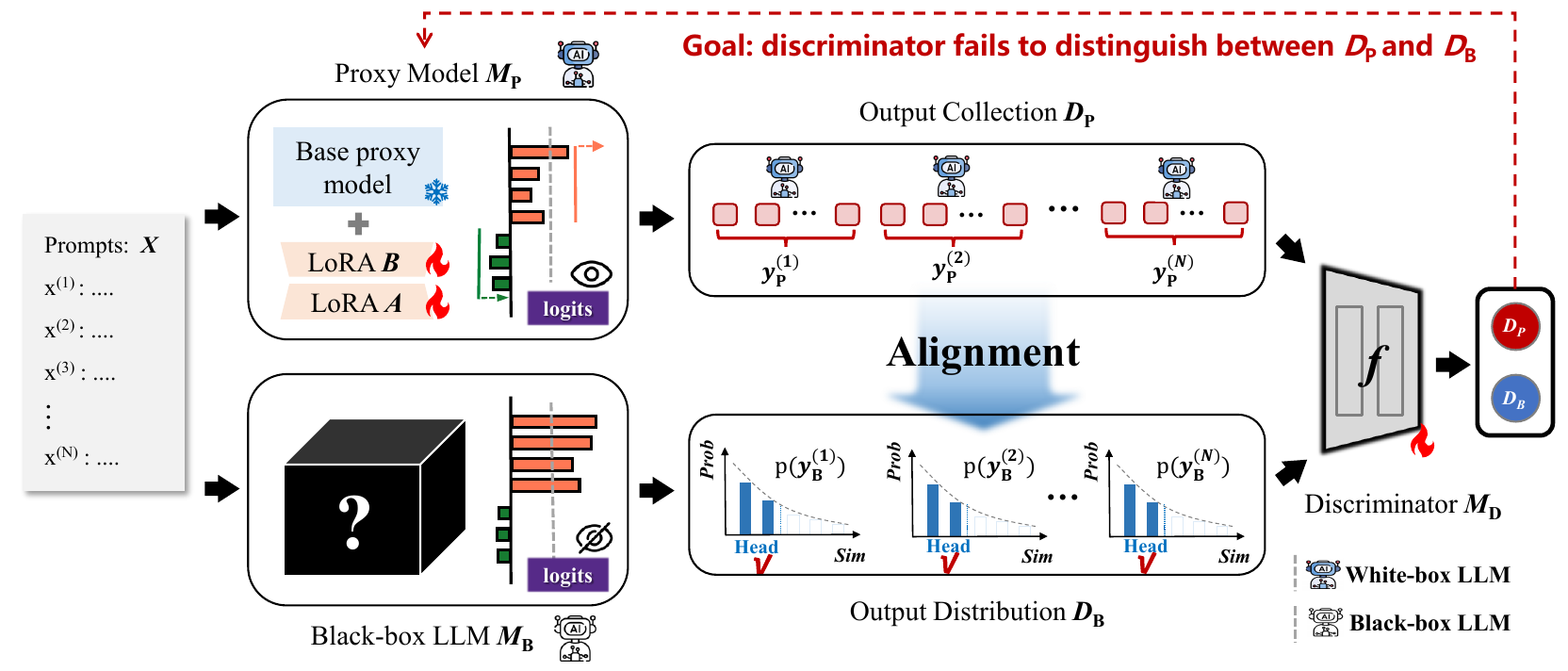} 
\caption{Overview of Distribution-Aligned Adversarial Distillation (DisAAD) framework. For a given small-size prompt set, we first collect the distillation dataset based on the target black-box LLM. Then, the LoRA-based proxy model is trained with the collected distillation dataset under the generation-discrimination architecture. The adversarial training objective is to enable the discriminator to learn to distinguish whether the responses generated by the proxy model are aligned with the high-probability regions from the target output distributions. The adversarial distillation terminates when the discriminator is unable to effectively distinguish the responses of the proxy model and the target black-box LLM. }
\label{flow1}
\end{figure*}

To efficiently fine-tune the proxy model, we employ Low-Rank Adaptation (LoRA) \citep{hu2022lora}. Instead of fine-tuning all of the model's parameters, LoRA freezes the original weights ($W_0$) and injects trainable low-rank matrices ($B$ and $A$) into each layer of the model. The effective weights are represented as:

\begin{equation}
W = W_0 + BA,
\end{equation}

\noindent{where $W_0 \in \mathbb{R}^{d \times d}$ represents the pre-trained weight matrix, $B \in \mathbb{R}^{d \times r}$ and $A \in \mathbb{R}^{r \times d}$ are trainable low-rank matrices with rank $r \ll d$. We denote the base proxy model as $\mathcal{M}_\text{p}$, while LoRA-enhanced proxy model as $\mathcal{M}_\text{p}$, parameterized by its trainable LoRA weights $\theta$.}

In our proposed {DisAAD} framework, we aim to train a proxy model $\mathcal{M}_\text{p}$ to approximate the output behavior of a black-box LLM, denoted as $\mathcal{M}_\text{B}$. To this end, we define the training objective of the proxy model as minimizing the following loss:
\begin{equation}
\min_{\theta} \mathcal{L}(\theta) = \mathcal{L}_\text{task}(\theta) + \lambda \mathcal{L}_\text{reg}(\theta),
\end{equation}

\noindent{where $\theta$ denotes the parameters of the proxy model $\mathcal{M}_\text{p}$, and $\lambda > 0$ is a regularization coefficient. The loss consists of two parts:}

\begin{itemize}
    \item \textbf{Task Loss} $\mathcal{L}_\text{task}$: A standard distillation loss that aligns the output of the proxy model with the black-box model at the token level.
    \item \textbf{Regularization Loss} $\mathcal{L}_\text{reg}$: A sequence-level alignment constraint that enforces the proxy model's output to be indistinguishable from the black-box model by a discriminator.
\end{itemize}

\noindent\textbf{Token-Level Distillation Loss (updating $\mathcal{M}_\text{p}$).}  
We adopt a next-token prediction loss to encourage the proxy model to imitate the output of the black-box LLM. Given a dataset of $N$ prompts $\{\boldsymbol{x}^{(i)}\}_{i=1}^N$, each associated with $M$ sampled responses from the black-box model $\{\boldsymbol{y}_\text{B}^{(i,j)}\}_{j=1}^M$, the task loss is defined as:

\begin{equation}
\small
\mathcal{L}_\text{task}(\theta) = -\frac{1}{NM} \sum_{i=1}^{N} \sum_{j=1}^{M} \sum_{t = l(\boldsymbol{x}^{(i)}) + 1}^{l(\boldsymbol{x}^{(i)}) + l(\boldsymbol{y}_\text{B}^{(i,j)})} \log P_\theta(y_t \mid y_{<t}),
\end{equation}

\noindent{where $l(\boldsymbol{x}^{(i)})$ denotes the length of the prompt and $l(\boldsymbol{y}_\text{B}^{(i,j)})$ is the length of the target response. Following common instruction tuning practices \citep{zeng2024dald}, we mask gradients from prompt tokens to prevent interference during learning.}

\noindent\textbf{Discriminator-Based Regularization (updating $\mathcal{M}_\text{p}$).}  
To further align the proxy model with the target model at the sequence level, we introduce a discriminator $\mathcal{M}_\text{D}$ parameterized by $\phi$. The discriminator receives a prompt-response pair and attempts to distinguish whether the response is from the black-box model or the proxy model. The regularization loss encourages the proxy model to generate outputs that the discriminator cannot confidently classify as fake:

\begin{equation}
\mathcal{L}_\text{reg}(\theta) = -\frac{1}{NM} \sum_{i=1}^{N} \sum_{j=1}^{M} \log \mathcal{M}_\text{D}(\boldsymbol{x}^{(i)}, \boldsymbol{y}_\text{P}^{(i,j)}; \phi),
\end{equation}

\noindent{where $\boldsymbol{y}_\text{P}^{(i,j)}$ denotes the response generated by the current proxy model $\mathcal{M}_\text{p}$ for prompt $\boldsymbol{x}^{(i)}$, and $\mathcal{M}_\text{D}(\cdot)$ is the discriminator's estimated probability that the generated response is from the target model.}

\noindent\textbf{Discriminator Loss (updating $\mathcal{M}_\text{D}$).}  
The discriminator is trained to classify whether a response is generated by the black-box model or by the proxy. Its objective is to maximize classification accuracy over real (target) and fake (proxy) samples:

\begin{multline}
\mathcal{L}_\text{D}(\phi) = -\frac{1}{NM} \sum_{i=1}^{N} \sum_{j=1}^{M} \log \mathcal{M}_\text{D}(\boldsymbol{x}^{(i)}, \boldsymbol{y}_\text{B}^{(i,j)}; \phi) \\
-\frac{1}{NM} \sum_{i=1}^{N} \sum_{j=1}^{M} \log \left(1 - \mathcal{M}_\text{D}(\boldsymbol{x}^{(i)}, \boldsymbol{y}_\text{P}^{(i,j)}; \phi)\right).
\end{multline}

\noindent\textbf{Training Procedure.}  
The proxy model and discriminator are trained in an alternating fashion:
\begin{enumerate}
    \item Fix the discriminator, update the proxy model $\theta$ by minimizing $\mathcal{L}(\theta)$.
    \item Fix the proxy model, update the discriminator $\phi$ by minimizing $\mathcal{L}_\text{D}(\phi)$.
\end{enumerate}

This iterative training allows the proxy model to gradually learn to produce responses that are both locally (token-wise) and globally (sequence-wise) aligned with the black-box LLM. The discriminator acts as an adaptive regularizer, improving the semantic fidelity of the proxy model's generations. For more details please refer to the Appendix C.2.

\subsection{Proxy-guided Uncertainty Quantification}

As shown in Fig. \ref{flow2}, we first utilize the proxy model to reproduce the responses of black-box LLMs, then extract the corresponding logits and estimate the real-time uncertainty based on evidential deep learning \citep{sensoy2018evidential,han2022trusted}.

\subsubsection{Logits as Evidence}

\begin{figure*}[h]
\centering
\includegraphics[width=0.95\textwidth]{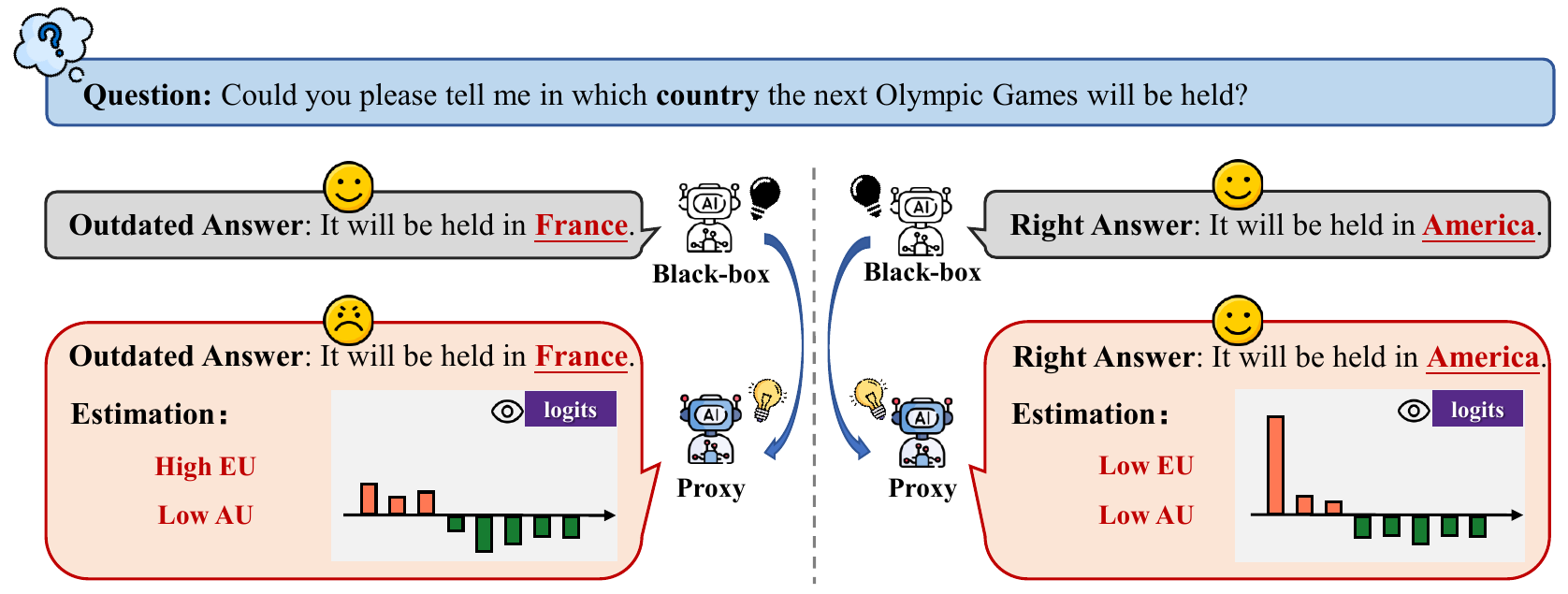} 
\caption{Overview of proxy-guided uncertainty quantification. For a given response from the target LLM, we first utilize the proxy model to reproduce the responses, then extract the corresponding logits and estimate the decoupled uncertainty. For an incorrect or outdated answer (e.g., ``France''), the proxy model would identify this scenario as High EU and Low AU, which indicates that the response of the target model is unreliable. Conversely, for a correct answer (e.g., ``America''), the proxy model would identify this scenario as Low EU and Low AU, which indicates that the response of the target model is reliable.}
\label{flow2}
\end{figure*}

Probability-based uncertainty estimation methods perform poorly because softmax normalization eliminates absolute evidence scale. Logits, however, can more flexibly capture both epistemic uncertainty and aleatoric uncertainty \citep{ma2025estimating}. Given a prompt $\boldsymbol{x}$ and response $\boldsymbol{y}_\text{B}$, we process the pair through our proxy model $\mathcal{M}_\text{p}$ to extract token-level logits $\boldsymbol{z}_t \in \mathbb{R}^{|\mathcal{V}|}$. To reduce noise from extremely low logits, we select only the top $K$ token candidates with the largest logits to model a Dirichlet distribution \citep{tang2024top}:

\begin{equation}
\alpha_k = f(\boldsymbol{z}_{t,k}), \quad \alpha_0 = \sum_{k=1}^{K} \alpha_k
\end{equation}

\noindent{where $\boldsymbol{z}_{t,k} \in \mathbb{R}$ denotes the logit value for the $k$-th token candidate at decoding step $t$, $f(\cdot)$ is the ReLU activation function that converts logits to evidence parameters, and $\alpha_0$ represents the total evidence strength.} 

\subsubsection{Aleatoric Uncertainty (AU)}

Aleatoric uncertainty, also known as data uncertainty, reflects the peak characteristic of output distributions. A lower AU indicates a peaked distribution where the probability mass is mainly concentrated on a single class, while the higher AU reflects the mass distributed uniformly across different classes. The specific definition is listed as follows:

\begin{equation}
\text{AU}(u_t) = -\sum_{k=1}^{K} \frac{\alpha_k}{\alpha_0} (\psi(\alpha_k + 1) - \psi(\alpha_0 + 1)),
\end{equation}

\noindent{where $\psi(\cdot)$ denotes the digamma function defined as $\psi(x) = \frac{d}{dx}\log\Gamma(x)$, $u_t = \{\alpha_1, \alpha_2, ..., \alpha_K\}$ is the set of evidence parameters at position $t$ which represents the Dirichlet distribution parameters derived from logits.}

\subsubsection{Epistemic Uncertainty (EU)}

Epistemic uncertainty, also known as model uncertainty, reflects the model's overall confidence in its prediction regardless of which specific token is selected. It is measured by:

\begin{equation}
\text{EU}(u_t) = \frac{K}{\sum_{k=1}^{K} (\alpha_k + 1)},
\end{equation}

\noindent{where $\sum_{k=1}^{K} (\alpha_k + 1)$ is a smoothed measure of total evidence strength. A lower EU corresponds to high confidence prediction, while higher EU indicates knowledge gaps.}

\section{Experiments}

\subsection{Response Reliability}

Consistent with concurrent work \citep{duan2024shifting, wang2025beyond}, we determine sentence reliability by focusing on the most uncertain tokens. The reliability of a given token $u_t$ is defined as:

\begin{equation}
R(u_t) = - \text{AU}(u_t) \cdot \text{EU}(u_t),
\end{equation}

\noindent{where $R(u_t)$ captures the synergistic impact between aleatoric and epistemic uncertainty.} Then the overall response reliability $R_{\text{response}}$ is calculated by averaging the reliability values of the $K^*$ least reliable tokens:

\begin{equation}
R_{\text{response}} = \frac{1}{K^*} \sum_{u_t \in T_{K^*}} R(u_t),
\end{equation}

\noindent{where $T_{K^*}$ represents the set of $K^*$ tokens with the lowest reliability values.}

\subsection{Evaluation Metrics and Datasets}

The evaluation is formulated as a binary classification task, and conducted on question-answering benchmarks from different domains, including life sciences (BioASQ \citep{tsatsaronis2015overview}), truthfulness (TruthfulQA \citep{lin2021truthfulqa}) and knowledge seeking (TriviaQA \citep{joshi2017triviaqa}). The responses generated by LLM with ``$\text{BLEURT}$$>$0.5'' or ``LLM-Judge=1'' are considered the correct answers \citep{xiong2024efficient}. The performance is quantified by AUROC, AUPR and ECE. For further details, please refer to the Appendix D.1.

\subsubsection{Baseline Methods}

We compare the proposed work with several SOTA black-box LLM uncertainty quantification methods, including Semantic Entropy (SE) and Discrete Semantic Entropy (DSE) \citep{farquhar2024detecting}, LN-Entropy (LNE) \citep{malinin2020uncertainty}, Lexical Similarity (LeS) \citep{lin2023generating} and EigV \citep{zhou2024larger}. In addition, to show that our work can offer competitive performance without requiring access to the LLM's internal states, some white-box methods including probability-based method, entropy-based method and LogTokU \citep{ma2025estimating, xiong2024efficient} are also used for comparative analysis.

\subsection{Distillation Dataset Collection}

We construct the distillation set by mixing half from in-domain evaluation prompts and half from out-of-domain conversational prompts to balance domain relevance and generalization. For each prompt, we draw $10$ high-quality candidates from the target LLM. DisAAD yields an effective and computationally efficient proxy model using only $1K$ distillation samples ($100$ prompts). For additional information, please refer to Appendices C.1 and D.5.

\subsection{Model Training}

For the generator (proxy model), we employ LLaMA series models \citep{zheng2024llamafactory} fine-tuned via Low-Rank Adaptation (LoRA) with rank=$32$, alpha=$64$, and dropout=$0.1$. Following the LLaMA architecture, we target all attention and feed-forward projections (``q\_proj'', ``v\_proj'', ``k\_proj'', ``o\_proj'', ``gate\_proj'', ``down\_proj'', ``up\_proj''). The model is optimized via AdamW with learning rate $1 \times 10^{-4}$ during training. For the discriminator, we utilize a GPT-2 encoder with the final three transformer layers unfrozen, optimized via AdamW at learning rate $1 \times 10^{-5}$. Throughout the adversarial distillation process, the discriminator serves a dual purpose by monitoring training sufficiency and providing quantitative evaluation metrics, effectively determining when the proxy model has successfully captured the characteristics of the target LLM's distribution. For more details, please refer to Appendix D.3.

\subsection{Main Results}

\begin{table*}[h]
\centering
\resizebox{2.0\columnwidth}{!}{
\begin{tabular}{ccccccccc}
\toprule
\multirow{2.5}{*}{Target} & \multirow{2.5}{*}{Method} & \multirow{2.5}{*}{$O(1)$} & \multicolumn{2}{c}{TruthfulQA} & \multicolumn{2}{c}{BioASQ} & \multicolumn{2}{c}{TriviaQA} \\
\cmidrule(lr){4-5} \cmidrule(lr){6-7} \cmidrule(lr){8-9}
& & & AUROC$\uparrow$ & AUPR$\uparrow$ & AUROC$\uparrow$ & AUPR$\uparrow$ & AUROC$\uparrow$ & AUPR$\uparrow$ \\
\midrule
\multicolumn{9}{c}{\textbf{Black-box setting}}\\
\midrule
\multirow{5}{*}{LLaMA2-70B} 
& LEN & \ding{55} & $49.19^{\pm 4.12}$ & $46.52^{\pm 3.50}$ & $54.66^{\pm 4.18}$ & $48.11^{\pm 3.33}$ & $64.24^{\pm 3.90}$ & $84.21^{\pm 4.52}$ \\
& SE & \ding{55} & $53.93^{\pm 3.98}$ & $46.73^{\pm 4.45}$ & $59.94^{\pm 3.01}$ & $48.25^{\pm 4.29}$ & $\bm{65.80}^{\pm 4.82}$ & $82.91^{\pm 5.60}$ \\
& DSE & \ding{55} & $52.52^{\pm 3.05}$ & $45.04^{\pm 4.61}$ & $59.83^{\pm 3.03}$ & $44.43^{\pm 3.58}$ & $65.28^{\pm 4.85}$ & $81.62^{\pm 6.69}$ \\
& LES & \ding{55} & $64.19^{\pm 4.80}$ & $76.39^{\pm 5.95}$ & $52.85^{\pm 4.15}$ & $45.73^{\pm 3.51}$ & $46.22^{\pm 4.40}$ & $\bm{90.26}^{\pm 3.05}$ \\
& EigV & \ding{55} & $66.21^{\pm 3.75}$ & $77.03^{\pm 4.88}$ & $53.15^{\pm 5.10}$ & $46.22^{\pm 4.40}$ & $55.88^{\pm 3.51}$ & $86.95^{\pm 4.12}$ \\
& DisAAD & \ding{51} & $\bm{80.15}^{\pm 1.12}$ & $\bm{78.07}^{\pm 1.25}$ & $\bm{70.46}^{\pm 2.40}$ & $\bm{78.74}^{\pm 1.19}$ & $65.03^{\pm 3.86}$ & $88.52^{\pm 2.20}$ \\
\midrule
\multirow{5}{*}{GPT-4} 
& LEN & \ding{55} & $53.57^{\pm 3.10}$ & $67.83^{\pm 4.01}$ & $60.90^{\pm 3.95}$ & $48.27^{\pm 4.22}$ & $50.89^{\pm 3.20}$ & $89.16^{\pm 4.15}$ \\
& SE & \ding{55} & $49.62^{\pm 4.23}$ & $63.54^{\pm 3.15}$ & $65.84^{\pm 4.70}$ & $57.80^{\pm 5.98}$ & $57.56^{\pm 4.01}$ & $91.54^{\pm 6.99}$ \\
& DSE & \ding{55} & $51.29^{\pm 3.18}$ & $64.66^{\pm 5.09}$ & $60.51^{\pm 3.98}$ & $47.89^{\pm 4.25}$ & $54.89^{\pm 3.11}$ & $90.98^{\pm 4.03}$ \\
& LES & \ding{55} & $65.10^{\pm 4.75}$ & $77.20^{\pm 5.90}$ & $53.15^{\pm 3.10}$ & $46.05^{\pm 3.40}$ & $47.33^{\pm 4.30}$ & $91.10^{\pm 3.00}$ \\
& EigV & \ding{55} & $66.05^{\pm 3.72}$ & $76.81^{\pm 4.90}$ & $63.90^{\pm 3.08}$ & $56.01^{\pm 6.48}$ & $55.91^{\pm 3.45}$ & $90.15^{\pm 4.08}$ \\
& DisAAD & \ding{51} & $\bm{80.78}^{\pm 2.83}$ & $\bm{90.79}^{\pm 1.77}$ & $\bm{69.93}^{\pm 3.42}$ & $\bm{87.42}^{\pm 4.90}$ & $\bm{68.93}^{\pm 1.60}$ & $\bm{95.80}^{\pm 2.50}$ \\
\midrule
\multicolumn{9}{c}{\textbf{White-box setting}}\\
\midrule
\multirow{4}{*}{LLaMA2-70B} 
& Probability & \ding{51} & $65.89^{\pm 4.70}$ & $59.90^{\pm 3.20}$ & $57.07^{\pm 4.05}$ & $67.20^{\pm 3.66}$ & $60.06^{\pm 4.99}$ & $85.89^{\pm 3.40}$ \\
& Entropy & \ding{51} & $67.32^{\pm 3.65}$ & $61.21^{\pm 4.15}$ & $59.67^{\pm 3.99}$ & $69.24^{\pm 4.59}$ & $59.32^{\pm 3.01}$ & $85.56^{\pm 4.42}$ \\
& LogTokU & \ding{51} & $72.17^{\pm 4.40}$ & $68.74^{\pm 3.80}$ & $58.98^{\pm 4.00}$ & $69.43^{\pm 3.58}$ & $64.40^{\pm 4.88}$ & $86.62^{\pm 3.35}$ \\
& DisAAD & \ding{51} & $\bm{80.15}^{\pm 1.12}$ & $\bm{78.07}^{\pm 1.25}$ & $\bm{70.46}^{\pm 2.40}$ & $\bm{78.74}^{\pm 1.19}$ & $\bm{65.03}^{\pm 3.86}$ & $\bm{88.52}^{\pm 2.20}$ \\
\midrule
\multirow{4}{*}{Qwen3-32B} 
& Probability & \ding{51} & $65.70^{\pm 4.72}$ & $73.10^{\pm 3.66}$ & $63.93^{\pm 4.85}$ & $83.33^{\pm 3.20}$ & $63.78^{\pm 3.90}$ & $79.12^{\pm 4.80}$ \\
& Entropy & \ding{51} & $65.90^{\pm 3.70}$ & $74.07^{\pm 4.60}$ & $64.67^{\pm 3.81}$ & $83.86^{\pm 4.15}$ & $66.58^{\pm 4.77}$ & $80.57^{\pm 3.75}$ \\
& LogTokU & \ding{51} & $66.02^{\pm 1.69}$ & $74.97^{\pm 2.55}$ & $64.47^{\pm 4.82}$ & $83.41^{\pm 3.18}$ & $\bm{75.22}^{\pm 3.45}$ & $\bm{86.78}^{\pm 4.33}$ \\
& DisAAD & \ding{51} & $\bm{74.78}^{\pm 2.30}$ & $\bm{83.10}^{\pm 3.10}$ & $\bm{65.71}^{\pm 2.75}$ & $\bm{84.80}^{\pm 1.05}$ & $69.35^{\pm 1.65}$ & $81.25^{\pm 2.70}$ \\
\bottomrule 
\end{tabular}}
\caption{Reliability estimation performance in the QA tasks of different domains. Response correctness for TruthfulQA is determined by ``$\text{BLEURT}$$>$0.5'', while the others are based on ``LLM-Judge=1''. $O(1)$ reflects the complexity of the response sampling process (multi-sample or single-sample). }
\label{tab1}
\end{table*}

\subsubsection{Results in Black-box Settings}

Table \ref{tab1} presents our main comparative results in black-box settings. The findings demonstrate that the proposed DisAAD achieves the best performance in response reliability estimation in most cases. Besides, a key advantage is its ability to perform real-time reliability estimation on a single response, which avoids the significant computational overhead of the multi-sample generation required by the other comparison methods. Specifically, when GPT-4 serves as the target LLM \citep{achiam2023gpt}, DisAAD achieves an average AUROC of $0.7321$ and AUPR of $0.9134$, substantially outperforming the best-performing baselines by $18.2\%$ and $22.9\%$, respectively. In particular, this is achieved using a proxy model with only $1\%$ of the target size. Similar performance is observed with other target LLMs. These results fully establish that DisAAD offers a more effective paradigm for LLM uncertainty quantification. For more details please refer to Appendix E.

\subsubsection{Results in White-box Settings}

To further investigate the performance of our proposed method, some representative white-box methods are used for comparison analysis. The results shown in Table \ref{tab1} indicate that our method not only performs comparably to these methods, but often outperforms them in most cases, despite the fact that the proposed DisAAD does not require internal information derived from the target LLM. Specifically, with LLaMA2-70B as target LLM, DisAAD improves the average AUROC by $6.7\%$ over the strongest baseline LogTokU. Compared to white-box methods that directly inherit the potentially flawed signal from the target's internal states, DisAAD achieves the uncertainty estimation by leveraging a smaller distilled proxy model, which appears to provide a more reliable result. 

\subsection{Further analysis}

\subsubsection{Efficiency of the Small-Size Distillation Dataset and Discriminator}

To demonstrate the training efficiency of DisAAD, we analyze the effectiveness of small-size distillation datasets and our generation-discrimination architecture (LLaMA3-3B and GPT-2). Fig. \ref{ab1}(a) shows both AUROC and AUPR rapidly improve and stabilize after approximately $50$ distillation prompts, with a slight pre-convergence decrease suggesting the proxy model captures essential uncertainty patterns before minor overfitting. The consistent ECE across different dataset sizes in Fig. \ref{ab1}(b) further support this finding. Fig. \ref{ab2}(a) illustrates the prediction gap (discriminator confidence difference between proxy and target outputs) narrowing to approximately $0.0050$, indicating the discriminator can no longer effectively distinguish between the models' outputs. This serves as an auxiliary indicator that the proxy model has been adequately trained. Additionally, Fig. \ref{ab2}(b) visualizes the semantic similarity distribution for validation samples, showing a clear shift toward similarity scores exceeding $0.9$, confirming high consistency between the proxy and target models. For additional theoretical proof and experimental results, refer to Appendices A and E.1.

\begin{figure}[t]
\centering
\includegraphics[width=1\columnwidth]{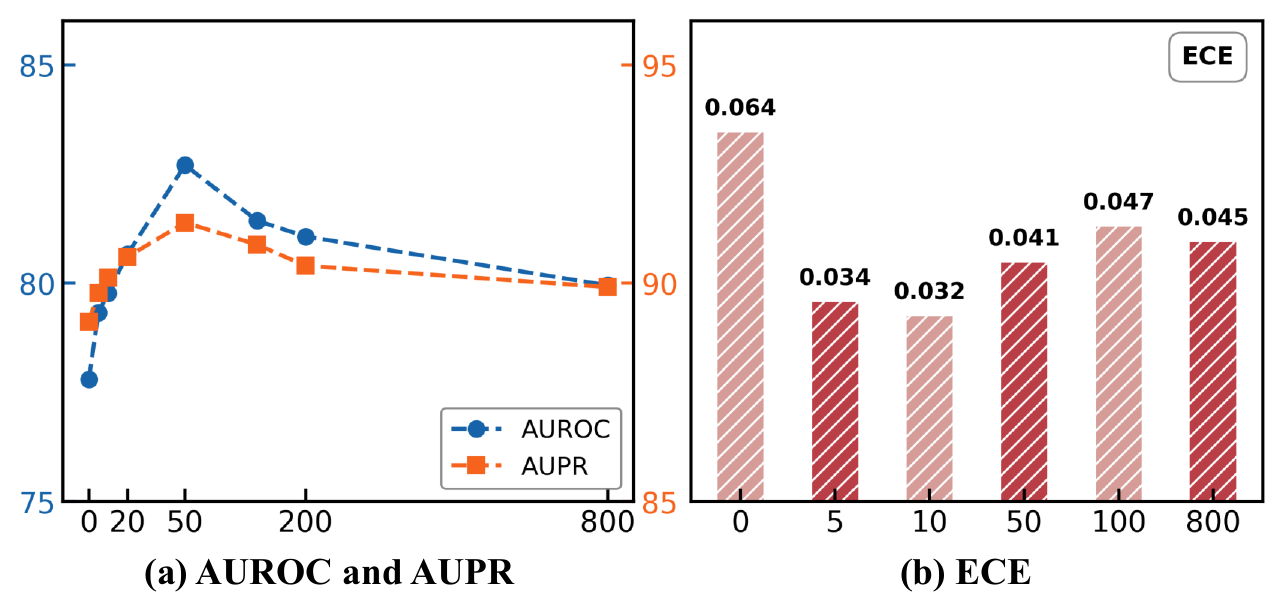} 
\caption{Reliability estimation performance from our distilled proxy model with different numbers of distillation prompts. Left: AUROC and AUPR results. Right: ECE results.}
\label{ab1}
\end{figure}

\begin{figure}[t]
\centering
\includegraphics[width=1\columnwidth]{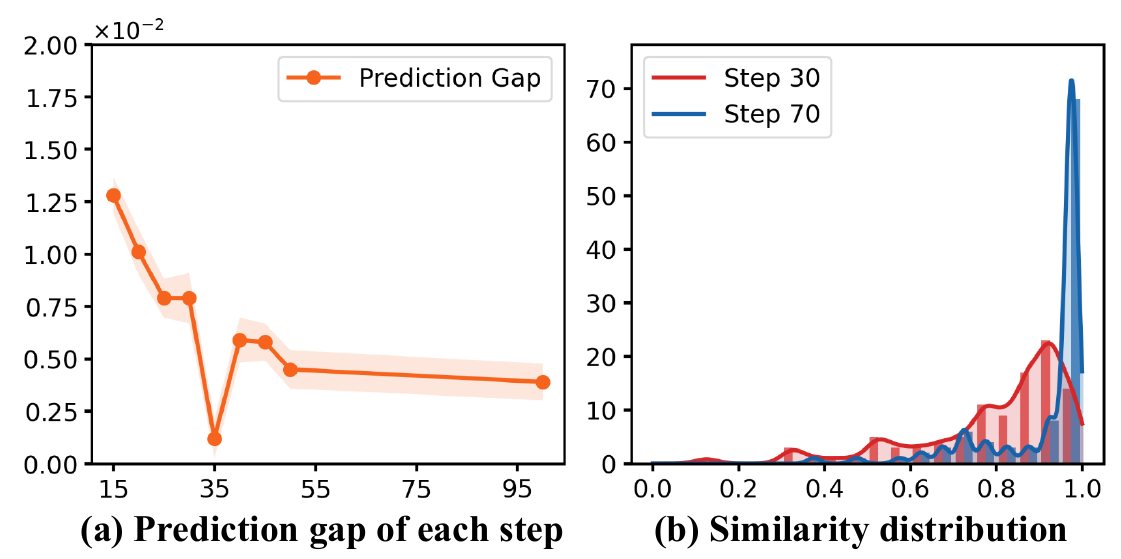} 
\caption{The performance of the discriminator in our DisAAD. Left: The prediction gap between the proxy model and the target LLM of each step. Right: The comparison semantic similarity distribution of Epoch 1.}
\label{ab2}
\end{figure}

\subsubsection{Effectiveness of Distilled Proxy Model}

To verify the effectiveness of the proposed adversarial distillation, we compare the reliability estimation performance of the base proxy model and our distilled proxy model on TruthfulQA. As demonstrated in Table \ref{tab3}, the distilled proxy model consistently shows better performance in AUROC and AUPR than the base proxy model. Furthermore, the ECE results indicate that the improvement of the discriminative power does not always come at the cost of calibration, e.g., DisAAD achieves a better ECE for both the Qwen3-32B \citep{yang2025qwen3technicalreport} and GPT-4. The results verify that the adversarial distillation process can significantly enhance the ability of the proxy model to accurately capture the output characteristics of the target black-box LLM, so that our method can provide more accurate uncertainty quantification results.

\begin{table}[h]
\centering
\resizebox{0.95\columnwidth}{!}{
\begin{tabular}{clccc}
\toprule
\multirow{2.5}{*}{Target} & \multirow{2.5}{*}{Proxy} & \multicolumn{3}{c}{TruthfulQA} \\
\cmidrule(lr){3-5}
& &  AUROC$\uparrow$ & AUPR$\uparrow$ & ECE$\downarrow$ \\
\midrule
\multirow{2}{*}{LLaMA2-70B} 
& Base$^*$ & 0.7801 & 0.7608 & 0.0779 \\
& DisAAD & \textbf{0.8015} & \textbf{0.7807} & \textbf{0.0741} \\
\midrule
\multirow{2}{*}{Qwen3-32B} 
& Base$^*$ & 0.7228 & 0.7793 & 0.0749 \\
& DisAAD & \textbf{0.7478} & \textbf{0.8310} & \textbf{0.0664} \\
\midrule
\multirow{2}{*}{GPT-4} 
& Base$^*$ & 0.7780 & 0.8912 & \textbf{0.0636} \\
& DisAAD & \textbf{0.8078} & \textbf{0.9079} & 0.0685 \\
\bottomrule
\end{tabular}}
\caption{Performance comparison of base and distilled proxy models on the TruthfulQA dataset. ``Base$^*$'' refers to the vanilla LLaMA3-3B, while ``DisAAD'' refers to the same architecture trained with our proposed method.}
\label{tab3}
\end{table}

\subsubsection{Flexibility in the Choice of Proxy Model}

We evaluated LLaMA3 models of varying sizes (1B, 3B, and 8B) as distilled proxy models for reliability estimation. Table \ref{tab4} reveals a non-monotonic relationship between model size and performance, with the 3B model achieving optimal results. This model shows a $3.2\%$ AUROC improvement over the 1B variant, which lacks sufficient capacity for complex uncertainty modeling. Conversely, the 8B model demonstrates a $2.7\%$ AUROC reduction compared to the 3B model, likely because larger LLMs tend to exhibit overconfidence even when uncertain \citep{zhou2024larger,steyvers2025large}. These findings suggest that the ideal proxy model requires moderate scale to effectively capture uncertainty patterns without inheriting the overconfidence issues of larger LLMs.

\begin{table}[h]
\centering
\resizebox{.99\columnwidth}{!}{
\begin{tabular}{ccccc}
\toprule
\multirow{2.5}{*}{Target} & \multirow{2.5}{*}{Proxy} & \multicolumn{3}{c}{TruthfulQA}  \\
\cmidrule(lr){3-5}
& & AUROC$\uparrow$ & AUPR$\uparrow$ & ECE$\downarrow$ \\
\midrule
\multirow{3}{*}{LLaMA2-70B} 
& LLaMA3-1B & 0.7768 & 0.7596 & \textbf{0.0461} \\
& LLaMA3-3B & \textbf{0.8015} & \textbf{0.7807} & 0.0741 \\
& LLaMA3-8B & 0.7797 & 0.7367 & 0.0723 \\
\bottomrule
\end{tabular}}
\caption{Reliability estimation performance of the distilled proxy model with different scales.}
\label{tab4}
\end{table}

\section{Conclusion}

In this work, we introduce Distribution-Aligned Adversarial Distillation (DisAAD), a novel framework for estimating the uncertainty of black-box LLMs using a lightweight proxy model. Unlike existing methods that require multiple queries or access to internal model parameters, our approach enables real-time uncertainty quantification solely based on a single input-response pair. Through adversarial training and distribution alignment, the proxy model effectively learns to approximate the high-probability output regions of the target black-box LLM, thus acquiring the ability to discern response reliability. Extensive experiments on multiple question-answering benchmarks demonstrate that DisAAD achieves state-of-the-art performance in black-box LLM uncertainty estimation, paving the way for a safer and more trustworthy deployment in real-world applications.

\section*{Limitations}

In this work, we mainly focus on token-level uncertainty estimation of black-box LLMs, which may overlook higher-level semantic or contextual inconsistencies in the LLM's full responses that are not captured by isolated token analysis. Besides, the proposed work relies on a prior adversarial distillation step to fine-tune the proxy model, ensuring that the proxy model can align with the target black-box LLM's output distribution before uncertainty assessment. Though the step of adversarial distillation would incur additional costs, both experimental results and theoretical analysis demonstrate that the proposed work can train an effective proxy model using only limited distillation samples.

\section*{Ethical Considerations}

In this work, we propose the DisAAD framework to estimate the uncertainty of black-box LLM via a lightweight proxy model. The datasets we sourced from are publicly available, ensuring transparency and reproducibility of the distillation dataset construction. We do not expect any direct ethical concern from our work, as the framework solely aims to enhance the reliability of black-box LLM outputs by quantifying uncertainty, rather than modifying or exploiting the target LLMs in harmful ways.

\section*{Acknowledgments}

This work is supported by the National Key Research and Development Program of China (2025YFF0515600) and the National Natural Science Foundation of China (62376193). We thank Zongbo Han (BUPT), Jingdong Chen (TJU), Yitao He (TJU) and Haiyun Yao (TJU) for their helpful discussion about the project and comments on the manuscript. The authors appreciate the valuable feedback from anonymous reviewers.

\bibliography{custom}

\newpage

\appendix
\onecolumn

\section*{\centering Appendix}

\section{Theoretical Analysis}

\subsection{Proof of Lipschitz Continuity}

\begin{theorem}[Lipschitz Continuity]
For the distilled model $G: \mathcal{X} \to \mathcal{Y}$, there exists a constant $L > 0$ such that for any $x_1, x_2 \in \mathcal{X}$: $\|G(x_1) - G(x_2)\| \leq L\|x_1 - x_2\|$.
\end{theorem}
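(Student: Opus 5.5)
We regard the distilled model $G$ as a composition of finitely many maps and use the fact that a composition of Lipschitz maps is Lipschitz, with constant equal to the product of the individual constants. To make this meaningful, the input space $\mathcal{X}$ is the space of (continuous) input embeddings rather than discrete token sequences. The map $G$ is the forward pass of the LoRA-enhanced proxy $\mathcal{M}_\text{p}$ up to the logits $\boldsymbol{z}_t$. Concretely, we write
\begin{equation}
G = h_{\text{out}} \circ \ell_J \circ \cdots \circ \ell_1,
\end{equation}
where each $\ell_j$ is a transformer sublayer (attention or feed-forward block with residual connection and normalization) and $h_{\text{out}}$ is the linear unembedding. We then aim for $L = \|W_{\text{out}}\| \prod_{j} L_j$.

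The first step handles the linear pieces. For every LoRA-adapted projection $W = W_0 + BA$, the triangle inequality and submultiplicativity give $\|W\| \le \|W_0\| + \|B\|\,\|A\|$. This is finite because the weights after training are fixed finite matrices; if one wants a uniform bound, weight decay in AdamW keeps the parameters bounded. The activations used in the feed-forward blocks (SiLU/GELU, ReLU) have bounded derivative, so they are Lipschitz. A residual block $x \mapsto x + f(x)$ therefore has constant $1 + L_f$.

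The second step is to bound the feed-forward sublayers by $\|W_{\text{down}}\| \, L_\sigma \, (\|W_{\text{gate}}\| + \|W_{\text{up}}\|)\cdot C$. Here the gated product requires bounded inputs, which the next step supplies.

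The main obstacle is that dot-product self-attention and RMSNorm are not globally Lipschitz on unbounded domains. Attention is quadratic in its input through $QK^\top$, and normalization blows up near zero. I would resolve this by restricting to a compact domain, which is natural in practice. Token embeddings come from a finite vocabulary, and context length is bounded, so $\mathcal{X}$ is a bounded set $\{\|x\|\le R\}$. RMSNorm includes $\epsilon>0$, so its Jacobian is bounded by roughly $\|\gamma\|/\sqrt{\epsilon}$.

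On this bounded set every sublayer is continuously differentiable. We then show inductively that the image of each layer stays bounded: the norm of the output of a layer with constant $L_j$ is at most $L_j R_{j-1} + \|\ell_j(0)\|$. Softmax is $1$-Lipschitz. The attention map $X \mapsto \mathrm{softmax}(XW_QW_K^\top X^\top/\sqrt{d})XW_V$ then has a Jacobian bounded by a polynomial in $R_{j-1}$ and the weight norms. By the mean value inequality on the convex ball, each $\ell_j$ is Lipschitz with that Jacobian bound.

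Multiplying the constants yields $L$, and the theorem follows. A remark should note that $L$ depends on $R$ and on the trained weight norms.
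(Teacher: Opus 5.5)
Your proposal is correct, but it takes a genuinely different route from the paper. The paper's proof identifies $G$ with the single linear map $x \mapsto (W_0 + BA)x$ and finishes in one line from the triangle inequality and submultiplicativity, giving $L = \|W_0\| + \|B\|\,\|A\|$. That is exactly your first step, which you use only as the per-projection ingredient inside a layerwise composition.

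The paper's route buys brevity and an explicit, domain-free constant written purely in terms of the LoRA factors. However, it is literally valid only if the whole network is linear. It therefore never deals with softmax attention (which, as you note, is not globally Lipschitz), the gated feed-forward, normalization, or the compounding over residual layers.

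Your route actually covers the LLaMA proxy. It also makes explicit two modelling choices the paper leaves implicit: $\mathcal{X}$ is a space of continuous embeddings, and the output is the logit vector rather than decoded text (on a continuum, argmax decoding or sampling would destroy continuity). The price is that your $L$ is only existential, depends on $R$, the context length and the trained weights, and grows multiplicatively with depth.

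Two refinements are worth making:
\begin{enumerate}
\item Your claim that RMSNorm is ``not globally Lipschitz'' contradicts your own correct bound one sentence later. The Jacobian of $x \mapsto \gamma \odot x/\sqrt{\|x\|^2/d + \epsilon}$ has norm at most $\max_i |\gamma_i|/\sqrt{\epsilon}$ everywhere, so only attention needs boundedness.
\item Because LLaMA is pre-norm, that boundedness comes for free. Each attention or feed-forward block receives $\mathrm{RMSNorm}(x)$, whose rows have norm at most $\sqrt{d}\,\max_i |\gamma_i|$. Hence each residual block $x \mapsto x + f(\mathrm{RMSNorm}(x))$ is globally Lipschitz. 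For a fixed context length you then get the statement on the whole embedding space, without the radius $R$ or the inductive radius propagation.
\end{enumerate}
Finally, if $\mathcal{X}$ is read literally as the finite set of embedded token sequences, existence of $L$ is immediate as a maximum of finitely many difference quotients. Your layerwise analysis is what delivers the result on a continuum.
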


\begin{proof}
Let $W$ denote the proxy model's weight matrix. It employs low-rank adaptation (LoRA) on top of base model, with weight matrix:

\begin{equation}
W = W_0 + BA,
\label{eq1}
\end{equation}

\noindent{where $W_0$ represents the base model weights, and $BA$ represents the low-rank adaptation with $B \in \mathbb{R}^{d \times r}$ and $A \in \mathbb{R}^{r \times k}$. Then, the output difference between the proxy model and black-box LLM under the given inputs $x_1, x_2$ can be expressed as:}

\begin{equation}
\begin{aligned}
\|G(x_1) - G(x_2)\| &= \|(W_0 + BA)(x_1 - x_2)\| \\
&\leq \|W_0(x_1 - x_2)\| + \|BA(x_1 - x_2)\|.
\end{aligned}
\label{eq62}
\end{equation}

For the LoRA component, it satisfies the submultiplicative property of matrix norms:


\begin{equation}
\begin{aligned}
\|G(x_1) - G(x_2)\| 
&\leq \left( \|W_0\| + \|B\| \cdot \|A\| \right) \|x_1 - x_2\|. 
\end{aligned}
\label{eq_merged}
\end{equation}

Let $\alpha$ and $r$ denote the LoRA scaling factor and the adaptation rank, respectively. Given the $\|BA\| \leq \tfrac{\alpha}{r}$, $\|W_0\| + \|B\| \cdot \|A\|$ is a constant, thus we obtain: 

\begin{equation}
\|G(x_1) - G(x_2)\| \leq L\|x_1 - x_2\|,
\end{equation}

\noindent{where $L = \|W_0\| + \|B\| \cdot \|A\|$ serves as our Lipschitz constant.} 
 
\end{proof}

\subsection{Proof of Adversarial Distillation with Limited Samples}

First, we offer the definitions of missing mass and Zipf distribution for LLMs.

\begin{definition}[Missing Mass]
Given the input prompt $x$, the corresponding true output distribution $P_{\text{B}}(\boldsymbol{y}|\boldsymbol{x})$ of a black-box LLM and $k$ samples $\{\boldsymbol{y}_\text{B}^{(i)}\}_{i=1}^k$ drawn from $P_{\text{B}}(\boldsymbol{y}|\boldsymbol{x})$, the missing mass $U_k$ is defined as:

\begin{equation}
U_k = \sum_{\boldsymbol{y} \in \mathcal{Y}} P_{\text{B}}(\boldsymbol{y}|\boldsymbol{x}) \cdot \mathbb{I}\{\boldsymbol{y} \notin \{\boldsymbol{y}_\text{B}^{(1)}, \ldots, \boldsymbol{y}_\text{B}^{(k)}\}\},
\end{equation}

\noindent{where $U_k$ represents the total probability of responses not observed in the $k$ samples, $\boldsymbol{y}$ represents a possible model response, $\mathcal{Y}$ is the set of all possible responses.}

\end{definition}

\begin{definition}[Zipf Distribution and Concentration Function]
The output distribution of one LLM typically follows a Zipf-like power law where the probability of the $i$-th most likely response is proportional to $i^{-\alpha}$ for some $\alpha > 1$:

\begin{equation}
P_{\text{B}}(\boldsymbol{y}_i|\boldsymbol{x}) \propto i^{-\alpha}.
\end{equation}

The concentration function $H(v)$ for a distribution is defined as:

\begin{equation}
H(v) = \sum_{P_{\text{B}}(\boldsymbol{y}|\boldsymbol{x}) \leq v} P_{\text{B}}(\boldsymbol{y}|\boldsymbol{x}).
\end{equation}
\end{definition}

To prove our main theorem, we need the following lemmas.

\begin{lemma}
For distributions following Zipf's law with parameter $\alpha > 1$, the parameter $\beta$ defined as:

\begin{equation}
\beta = \liminf_{v \to 0} \frac{\ln H(v)}{\ln v}, 
\end{equation}

\noindent{where the parameter $\beta$ satisfies $\beta \geq \frac{\alpha-1}{\alpha}$. Furthermore, for any $\varepsilon > 0$, there exists $k_0$ such that for all $k > k_0$:}

\begin{equation}
\mathbb{E}[U_k] \leq k^{-(\beta-\varepsilon)}.
\end{equation}
\end{lemma}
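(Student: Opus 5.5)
We argue in two parts: first compute $\beta$ directly from the Zipf tail, then bound the expected missing mass through the standard identity $\mathbb{E}[U_k]=\sum_{\boldsymbol{y}} p_{\boldsymbol{y}}(1-p_{\boldsymbol{y}})^k$, where $p_{\boldsymbol{y}}=P_{\text{B}}(\boldsymbol{y}|\boldsymbol{x})$. This identity follows from linearity of expectation and the independence of the $k$ samples.

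\textbf{Step 1: lower bound on $\beta$.} Write $p_i = c\, i^{-\alpha}$ with $c=1/\zeta(\alpha)$, which is finite since $\alpha>1$. The condition $p_i\le v$ is equivalent to $i\ge i_v:=(c/v)^{1/\alpha}$. Comparing the sum with an integral gives
\begin{equation*}
H(v)=\sum_{i\ge i_v} c\, i^{-\alpha}\le c\,\Big(i_v^{-\alpha}+\int_{i_v}^\infty s^{-\alpha}\,ds\Big)\le C\, i_v^{1-\alpha} = C' v^{(\alpha-1)/\alpha}
\end{equation*}
for small $v$. Since $\ln v<0$, taking logarithms reverses the inequality: $\ln H(v)/\ln v \ge (\alpha-1)/\alpha + \ln C'/\ln v$. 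The last term tends to $0$, so the $\liminf$ gives $\beta\ge(\alpha-1)/\alpha$. A matching lower bound on $H$ would show equality, but only the inequality is needed.

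\textbf{Step 2: bound on $\mathbb{E}[U_k]$.} Fix $\varepsilon>0$ and set $\beta'=\beta-\varepsilon/2$. By the definition of $\liminf$, there is $v_0$ such that $H(v)\le v^{\beta'}$ for all $v<v_0$. For a threshold $v$ to be chosen, split the sum into two parts:
\begin{equation*}
\mathbb{E}[U_k]=\sum_{p_{\boldsymbol{y}}\le v} p_{\boldsymbol{y}}(1-p_{\boldsymbol{y}})^k+\sum_{p_{\boldsymbol{y}}> v} p_{\boldsymbol{y}}(1-p_{\boldsymbol{y}})^k\le H(v)+ (1-v)^k\le v^{\beta'}+e^{-kv}.
\end{equation*}
Choose $v=v_k=(\beta'+1)\ln k/k$, which is below $v_0$ for large $k$. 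Then $e^{-kv_k}=k^{-(\beta'+1)}$, which is negligible. The main term is $v_k^{\beta'}=((\beta'+1)\ln k)^{\beta'}k^{-\beta'}$. The logarithmic factor is at most $k^{\varepsilon/4}$ for large $k$, so
\begin{equation*}
\mathbb{E}[U_k]\le k^{-(\beta-3\varepsilon/4)}+k^{-(\beta'+1)}\le k^{-(\beta-\varepsilon)}
\end{equation*}
for all $k>k_0$. The slack of $\varepsilon/4$ absorbs the factor $2$ from summing the two terms.

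\textbf{Main obstacle.} The delicate part is Step 2: choosing the splitting threshold so that both the logarithmic loss and the exponential term fit inside the $\varepsilon$ slack. This is where the $\liminf$ (rather than a limit) matters, because the bound $H(v)\le v^{\beta-\varepsilon/2}$ is only available eventually. The choice $v_k\sim \ln k/k$ is the natural balance between the two terms.

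A smaller issue is that for $\beta$ close to $0$ the bound should be read with $\beta-\varepsilon>0$. If $\beta-\varepsilon\le 0$, the claim is trivial, since $U_k\le 1\le k^{-(\beta-\varepsilon)}$.
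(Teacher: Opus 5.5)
Your proof is correct, and it is more self-contained than the paper's. The paper's proof is only a sketch. For the first claim it invokes ``the theory of regular variation'' to assert $H(v)\sim v^{(\alpha-1)/\alpha}$. Up to a constant factor, that is the two-sided version of your Step~1, and it would in fact give $\beta=(\alpha-1)/\alpha$. For the missing-mass bound it offers no mechanism at all, only an appeal to ``established results in the concentration function theory.''

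You replace both with elementary arguments:
\begin{itemize}
\item For Step~1, a sum--integral comparison gives the one-sided bound $H(v)\le C' v^{(\alpha-1)/\alpha}$, which is all the inequality on $\beta$ requires.
\item For Step~2, you use the identity $\mathbb{E}[U_k]=\sum_{\boldsymbol{y}}p_{\boldsymbol{y}}(1-p_{\boldsymbol{y}})^k$ and the split $\mathbb{E}[U_k]\le H(v)+e^{-kv}$, with threshold $v_k=(\beta'+1)\ln k/k$.
\item The polylogarithmic loss from that threshold is absorbed by the $\varepsilon$ slack.
\item Your bookkeeping checks out: $k^{-(\beta-3\varepsilon/4)}+k^{-(\beta'+1)}\le k^{-(\beta-\varepsilon)}$ for large $k$, and you handle the trivial case $\beta\le\varepsilon$ separately.
\end{itemize}

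One thing your route reveals is that Step~2 never uses the Zipf form. It yields $\mathbb{E}[U_k]\le k^{-(\beta-\varepsilon)}$ eventually for any discrete distribution, using only the $\liminf$ definition of $\beta$. So the Zipf hypothesis enters solely through the lower bound on $\beta$.

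The only tacit assumption is that $\beta$ is finite, which your $\beta'=\beta-\varepsilon/2$ needs. It holds here because the Zipf law has infinite support: $H(p_i)\ge p_i$ with $p_i\to 0$, so $\beta\le 1$.
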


\begin{proof}

For distributions following Zipf's law, the concentration function exhibits specific scaling properties near zero. Using the theory of regular variation, for a Zipf distribution with parameter $\alpha > 1$, it can be shown that $H(v) \sim v^{(\alpha-1)/\alpha}$ as $v \to 0$. This directly leads to $\beta \geq \frac{\alpha-1}{\alpha}$.

The bound on $\mathbb{E}[U_k]$ follows from established results in the concentration function theory for discrete distributions with heavy tails. Specifically, the expected missing mass decays polynomially with the sample size at a rate determined by the parameter $\beta$.
\end{proof}

\begin{lemma}
Using Hoeffding's inequality, with probability at least $1-\delta/2$:
\begin{equation}
|U_k - \mathbb{E}[U_k]| \leq \sqrt{\frac{\ln(2/\delta)}{2k}}.
\end{equation}
\end{lemma}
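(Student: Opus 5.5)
The plan is to write $U_k$ as a function of the $k$ i.i.d.\ samples $\boldsymbol{y}_\text{B}^{(1)},\dots,\boldsymbol{y}_\text{B}^{(k)}\sim P_\text{B}(\cdot|\boldsymbol{x})$, namely $U_k=f(\boldsymbol{y}_\text{B}^{(1)},\dots,\boldsymbol{y}_\text{B}^{(k)})$, and to apply the bounded-differences (McDiarmid) form of Hoeffding's inequality. This is the natural route because $U_k$ is not a sum of independent terms, so the plain Hoeffding bound for sums does not apply directly. The martingale version only needs control of how much $U_k$ can change when a single sample is replaced.

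The first step is to bound the coordinate sensitivity. Fix all samples except the $i$-th, and replace $\boldsymbol{y}_\text{B}^{(i)}$ by some $\boldsymbol{y}'$. The set of observed responses then changes by at most one element removed and at most one element added. Hence $U_k$ changes by at most $\max\{P_\text{B}(\boldsymbol{y}_\text{B}^{(i)}|\boldsymbol{x}),\,P_\text{B}(\boldsymbol{y}'|\boldsymbol{x})\}\le p_{\max}$, where $p_{\max}$ is the largest single-response probability. This gives $c_i\le p_{\max}\le 1$ in every coordinate. The sharper value $c_i = 1/k$ needed for the stated rate cannot come from this crude bound. I would obtain it by working with the normalized form: write the missing mass against the empirical count of unseen mass and note that the relevant quantity has per-sample influence of order $1/k$. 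Alternatively, I would invoke the known result that the missing mass satisfies sub-Gaussian concentration with variance proxy $1/k$; this is McAllester--Schapire, and McAllester--Ortiz give the sharper form, both of which are Hoeffding-type results.

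With $c_i=1/k$, McDiarmid gives
\begin{equation*}
\Pr\bigl(|U_k-\mathbb{E}[U_k]|\ge \epsilon\bigr)\le 2\exp\!\left(-\frac{2\epsilon^2}{\sum_i c_i^2}\right)=2\exp(-2k\epsilon^2).
\end{equation*}
Setting the right-hand side equal to $\delta/2$ yields $\epsilon=\sqrt{\ln(4/\delta)/(2k)}$. To land exactly on the stated $\sqrt{\ln(2/\delta)/(2k)}$, I would apply the one-sided bound to each tail with failure probability $\delta/4$, or equivalently adjust the constant convention. This constant bookkeeping should be checked against the union bound later used for $1-\delta$ in the main theorem.

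The main obstacle is justifying the $1/k$ sensitivity, since, as noted above, naive replacement only gives $p_{\max}$. What I would try first is to cite the McAllester--Ortiz deviation bounds for the missing mass, which are derived precisely to circumvent this issue using negative association of the occupancy indicators. Negative association lets the indicators $\mathbb{I}\{\boldsymbol{y}\notin\text{sample}\}$ be treated as if independent for upper-tail Chernoff bounds, and this produces the Hoeffding-type rate $\exp(-c k\epsilon^2)$.
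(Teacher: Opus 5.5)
Your route is the paper's route. Its proof simply asserts that replacing one sample moves $U_k$ by at most $1/k$, then applies Hoeffding/McDiarmid. You are right that this sensitivity claim is false. If $\boldsymbol{y}_{\text{B}}^{(i)}$ is the only occurrence of a response of mass $1/2$, swapping it for an unseen low-mass response raises $U_k$ by about $1/2$. So the only valid uniform constant is $c_i = p_{\max}$, which gives a deviation of order $p_{\max}\sqrt{k\ln(1/\delta)}$ and is useless.

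\textbf{The gap.} Neither of your patches restores $c_i = 1/k$. Bounded differences is a deterministic property of the map $(\boldsymbol{y}_{\text{B}}^{(1)},\dots,\boldsymbol{y}_{\text{B}}^{(k)}) \mapsto U_k$, and the example above shows it fails; no ``normalized form'' changes that. The statistic that genuinely has $O(1/k)$ influence is the Good--Turing estimator $N_1/k$ (influence at most $2/k$). But concentration of $N_1/k$ says nothing about $U_k - \mathbb{E}[U_k]$ without a separate argument for $U_k$ itself. Likewise, the missing-mass bounds you cite do not supply a sensitivity bound to feed into McDiarmid. Their standard argument bypasses McDiarmid: it uses negative association of the indicators $\mathbb{I}\{\boldsymbol{y}\notin\text{sample}\}$ plus Chernoff bounds for the resulting heterogeneous weighted Bernoulli sum. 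The result is one-sided tails $e^{-Ck\epsilon^2}$ with their own constants, derived separately for each tail. Note also that your ``variance proxy $1/k$'' gives $e^{-k\epsilon^2/2}$ per tail, four times weaker in the exponent than the $e^{-2k\epsilon^2}$ you then plug in.

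\textbf{The constants.} The bookkeeping fails even if one grants $c_i = 1/k$. McDiarmid gives $\Pr(|U_k - \mathbb{E}[U_k]| \ge \epsilon) \le 2e^{-2k\epsilon^2}$, so at confidence $1-\delta/2$ the radius is $\sqrt{\ln(4/\delta)/(2k)}$. Splitting into two one-sided tails at $\delta/4$ each gives exactly the same $\ln(4/\delta)$, not $\ln(2/\delta)$; ``adjusting the convention'' just changes the statement. Reaching $\sqrt{\ln(2/\delta)/(2k)}$ at level $1-\delta/2$ would require the two-sided tail itself to be at most $e^{-2k\epsilon^2}$, i.e.\ a leading constant of $1$. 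No Hoeffding-type argument gives that, and the paper's proof has the same slip.

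What the negative-association route in your last paragraph honestly delivers is $|U_k - \mathbb{E}[U_k]| \le \sqrt{C\ln(c/\delta)/k}$ with probability at least $1-\delta/2$, for absolute constants $C, c$. That has the same $k^{-1/2}$ rate and is enough for the later lemmas and the sample-size theorem after re-absorbing constants. It is not, however, the lemma with the constants as stated.
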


\begin{proof}
The missing mass $U_k$ can be viewed as a function of $k$ independent samples from $P_{\text{B}}(\boldsymbol{y}|\boldsymbol{x})$. This function satisfies a bounded differences condition: changing any single sample can change the value of $U_k$ by at most $\frac{1}{k}$. Applying Hoeffding's inequality for such functions yields the stated result.
\end{proof}

\begin{lemma}
For the empirical distribution $\hat{P}_{\text{B}}$ constructed from $k$ samples:
\begin{equation}
\mathcal{D}_{\text{KL}}(P_{\text{B}} \parallel \hat{P}_{\text{B}}) \leq -\ln(1 - U_k) + \frac{1}{1-U_k}\sum_{i=1}^k \frac{|f_i - p_i|}{p_i},
\end{equation}

\noindent{where $p_i = P_{\text{B}}(\boldsymbol{y}_\text{B}^{(i)}|\boldsymbol{x})$ and $f_i$ is the empirical frequency of $\boldsymbol{y}_\text{B}^{(i)}$.}
\end{lemma}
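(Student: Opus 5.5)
The plan is to first fix the interpretation, since $\mathcal{D}_{\text{KL}}(P_{\text{B}} \parallel \hat{P}_{\text{B}})$ is literally infinite whenever $U_k>0$: $\hat{P}_{\text{B}}$ puts zero mass on every unobserved response, while $P_{\text{B}}$ does not. I will therefore read the left-hand side as the divergence between $\hat{P}_{\text{B}}$ and $P_{\text{B}}$ conditioned on the observed support. Let $S=\{\boldsymbol{y}_\text{B}^{(1)},\ldots,\boldsymbol{y}_\text{B}^{(k)}\}$, with the sum $\sum_{i=1}^k$ in the statement ranging over the distinct elements of $S$. By the definition of missing mass, $P_{\text{B}}(S\mid\boldsymbol{x}) = 1-U_k$, so the restricted distribution is $Q_i = p_i/(1-U_k)$ for $\boldsymbol{y}_\text{B}^{(i)}\in S$. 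This single renormalisation is what should produce both the additive $-\ln(1-U_k)$ and the multiplicative $\frac{1}{1-U_k}$ in the bound, so it looks like the intended reading.

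With this reading, the first step is the exact decomposition
\begin{equation*}
\sum_{i} Q_i \ln\frac{Q_i}{f_i} = -\ln(1-U_k) + \frac{1}{1-U_k}\sum_{i} p_i \ln\frac{p_i}{f_i}.
\end{equation*}
The first term already matches the statement. What remains is to show that $\sum_i p_i \ln(p_i/f_i) \le \sum_i |f_i-p_i|/p_i$, which I will do termwise.

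The termwise inequality is the step I expect to be the main obstacle. The elementary bound $\ln x \le x-1$ with $x=p_i/f_i$ gives
\begin{equation*}
p_i\ln\frac{p_i}{f_i} \le \frac{p_i(p_i-f_i)}{f_i},
\end{equation*}
whose denominator is $f_i$ rather than $p_i$. To move the denominator to $p_i$, I will use $f_i \ge 1/k$, which holds because every observed response has at least one occurrence, together with $p_i\le 1$. The target inequality $p_i^2/f_i \le 1/p_i$ is equivalent to $p_i^3 \le f_i$, so I need it only in the regime $p_i^3\le f_i$, which includes every term with $f_i\ge p_i$. Terms with $f_i\ge p_i$ are in fact trivial, since there $\ln(p_i/f_i)\le 0$. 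For terms with $f_i<p_i$, I will try the alternative bound $-\ln(1-u)\le u/(1-u)$ applied with $u=(p_i-f_i)/p_i$. This yields $p_i\ln(p_i/f_i)\le p_i(p_i-f_i)/f_i$ again, so the gap is exactly the case $f_i<p_i^3$.

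If this gap cannot be closed for all $i$, I will state the lemma under the mild standing assumption, natural in the high-probability regime targeted by the distillation set, that each observed response satisfies $f_i\ge p_i^3$; for instance $f_i\ge p_i/2$ and $p_i\le 1/\sqrt{2}$ together suffice. I would then note that Lemma~2 and standard concentration make this hold with high probability for large $k$. Summing the termwise bounds and substituting into the decomposition above then gives the claimed inequality.
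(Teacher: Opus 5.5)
\textbf{The gap.} Your reading of the left-hand side is justified: when $U_k>0$ the literal divergence is $+\infty$. Your renormalisation $Q_y=p_y/(1-U_k)$ on $S$ is also a genuinely different and cleaner route than the paper's. The paper splits the divergence into an unobserved part, asserted to be at most $-\ln(1-U_k)$ after an unspecified smoothing, and an observed part, asserted ``by convexity'' to be at most $\frac{1}{1-U_k}\sum_i|f_i-p_i|/p_i$. Your identity instead produces the additive $-\ln(1-U_k)$ and the factor $\frac{1}{1-U_k}$ exactly.

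What remains is $\sum_{y\in S}p_y\ln(p_y/f_y)\le\sum_{i=1}^k|f_i-p_i|/p_i$. With the right-hand sum taken over distinct responses, as you chose, this is false, so your extra hypothesis changes the lemma rather than patching a technicality. Take $P_{\text{B}}$ uniform on two responses and $k=1000$ samples containing the first once and the second $999$ times. Then $U_k=0$, so no reinterpretation is involved. Yet $\mathcal{D}_{\text{KL}}(P_{\text{B}}\parallel\hat P_{\text{B}})=\frac12\ln\frac{k^2}{4(k-1)}\approx 2.76$, while the distinct-response right-hand side is $2-4/k\approx 2.00$; as $k\to\infty$ the left side grows like $\frac12\ln k$.

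There are two smaller slips. First, via $\ln x\le x-1$, the comparison $p_i(p_i-f_i)/f_i\le(p_i-f_i)/p_i$ is equivalent to $f_i\ge p_i^2$, not $f_i\ge p_i^3$. Your termwise claim under $f_i\ge p_i^3$ does hold, by concavity of $r\mapsto 1-r+p_i\ln r$ on $[p_i^2,1]$, but not by the route you wrote. Second, Lemma~2 controls only $U_k$, so it cannot certify any condition on individual frequencies $f_i$.

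\textbf{The missing idea.} Read $\sum_{i=1}^k$ literally, over the $k$ samples with repetition. Then a response $y\in S$ with count $kf_y$ contributes $kf_y|f_y-p_y|/p_y$, and you should compare globally rather than termwise. Let $S_-=\{y\in S:f_y<p_y\}$ and $S_+=\{y\in S:f_y>p_y\}$, with $D=\sum_{S_-}(p_y-f_y)$ and $E=\sum_{S_+}(f_y-p_y)$. Since $\sum_S f_y=1$ and $\sum_S p_y=1-U_k$, you get $E-D=U_k\ge 0$.

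On the left, terms outside $S_-$ are nonpositive. For $y\in S_-$ there are two cases:
\begin{itemize}
\item If $f_y\ge p_y/2$, then $p_y\ln(p_y/f_y)\le p_y(p_y-f_y)/f_y\le 2(p_y-f_y)$.
\item If $f_y<p_y/2$, then $f_y\ge 1/k$ and $p_y\le 1$ give $p_y\ln(p_y/f_y)\le p_y\ln k\le 2(p_y-f_y)\ln k$.
\end{itemize}
Hence the left side is at most $2\max(1,\ln k)\,D$. On the right, $\sum_{S_+}kf_y(f_y-p_y)/p_y\ge kE\ge kD$. Finally, $2\max(1,\ln k)\le k$ for $k\ge 2$, and for $k=1$ the set $S_-$ is empty.

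Multiplying by $\frac{1}{1-U_k}$ and adding $-\ln(1-U_k)$ proves the lemma in your conditional reading with no additional assumption. The paper's sketch asserts this step but never derives it.
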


\begin{proof}
We decompose the KL divergence between the true distribution $P_{\text{B}}$ and the empirical distribution $\hat{P}_{\text{B}}$:

\begin{align}
\mathcal{D}_{\text{KL}}(P_{\text{B}} \parallel \hat{P}_{\text{B}}) &= \sum_{\boldsymbol{y} \in \mathcal{Y}} P_{\text{B}}(\boldsymbol{y}|\boldsymbol{x}) \ln\frac{P_{\text{B}}(\boldsymbol{y}|\boldsymbol{x})}{\hat{P}_{\text{B}}(\boldsymbol{y}|\boldsymbol{x})} \\
&= \sum_{\boldsymbol{y} \notin \{\boldsymbol{y}_\text{B}^{(i)}\}_{i=1}^k} P_{\text{B}}(\boldsymbol{y}|\boldsymbol{x}) \ln\frac{P_{\text{B}}(\boldsymbol{y}|\boldsymbol{x})}{\hat{P}_{\text{B}}(\boldsymbol{y}|\boldsymbol{x})} + \sum_{i=1}^k P_{\text{B}}(\boldsymbol{y}_\text{B}^{(i)}|\boldsymbol{x}) \ln\frac{P_{\text{B}}(\boldsymbol{y}_\text{B}^{(i)}|\boldsymbol{x})}{\hat{P}_{\text{B}}(\boldsymbol{y}_\text{B}^{(i)}|\boldsymbol{x})}.
\end{align}

For the first term, note that $\hat{P}_{\text{B}}(\boldsymbol{y}|\boldsymbol{x}) = 0$ for all $\boldsymbol{y} \notin \{\boldsymbol{y}_\text{B}^{(i)}\}_{i=1}^k$,  a standard approach involves using a smoothed version of the empirical distribution to handle the undefined logarithm, which leads to a bound related to the missing mass $U_k$. This yields:

\begin{equation}
\sum_{\boldsymbol{y} \notin \{\boldsymbol{y}_\text{B}^{(i)}\}_{i=1}^k} P_{\text{B}}(\boldsymbol{y}|\boldsymbol{x}) \ln\frac{P_{\text{B}}(\boldsymbol{y}|\boldsymbol{x})}{\hat{P}_{\text{B}}(\boldsymbol{y}|\boldsymbol{x})} \leq -\ln(1 - U_k).
\end{equation}

For the second term, by applying convexity arguments and properties of logarithms, one can derive the following bound for the observed samples:

\begin{equation}
\sum_{i=1}^k P_{\text{B}}(\boldsymbol{y}_\text{B}^{(i)}|\boldsymbol{x}) \ln\frac{P_{\text{B}}(\boldsymbol{y}_\text{B}^{(i)}|\boldsymbol{x})}{\hat{P}_{\text{B}}(\boldsymbol{y}_\text{B}^{(i)}|\boldsymbol{x})} \leq \frac{1}{1-U_k}\sum_{i=1}^k \frac{|f_i - p_i|}{p_i}.
\end{equation}

Combining these bounds gives us the desired result.
\end{proof}

\begin{lemma}
With probability at least $1-\delta/2$:
\begin{equation}
\sum_{i=1}^k \frac{|f_i - p_i|}{p_i} \leq C_2\sqrt{\frac{\ln(2/\delta)}{k}},
\end{equation}
where $C_2$ is a constant.
\end{lemma}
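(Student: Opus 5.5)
We read $f_i$ as the empirical frequency $\hat P_\text{B}(\boldsymbol{y}_\text{B}^{(i)}|\boldsymbol{x})$ of the $i$-th observed response among the $k$ i.i.d. draws, and $p_i$ as its true probability. Each $f_i$ is then an average of $k$ i.i.d. Bernoulli indicators $\mathbb{I}\{\boldsymbol{y}_\text{B}^{(j)}=\boldsymbol{y}_\text{B}^{(i)}\}$, so it is an unbiased estimate of $p_i$. The plan is to control each deviation $|f_i-p_i|$ by a Hoeffding-type bound, in the same spirit as the preceding lemma on $U_k$, and then to sum the weighted deviations. The constant $C_2$ will absorb a factor depending on the smallest retained probability.

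\textbf{Step 1: reduce to distinct atoms.} Repeated draws of the same response contribute identical terms. We therefore rewrite the sum over the set $S$ of distinct observed responses, weighting each atom by its multiplicity; since multiplicities are at most $k$, this is harmless after normalization. By the construction of $\mathcal{D}_\text{distill}$, where only the top-$M$ semantically consistent responses are kept, together with the Zipf assumption with $\alpha>1$, we may restrict attention to atoms satisfying $p_i\ge p_{\min}>0$. Here $p_{\min}$ depends on the truncation but not on $k$. This is the assumption that makes a $k$-independent constant possible.

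\textbf{Step 2: concentrate each frequency.} For a fixed atom $\boldsymbol{y}$ with probability $p$, Hoeffding gives
\[
|\hat f-p|\le\sqrt{\ln(2/\delta')/(2k)}
\]
with probability at least $1-\delta'$. We cannot union bound over the data-dependent set $S$ directly. Instead, we take a union bound over the finite set of atoms with $p\ge p_{\min}$, of which there are at most $1/p_{\min}$, choosing $\delta'=\delta p_{\min}/2$. On the resulting event, every retained atom satisfies
\[
|f_i-p_i|\le\sqrt{\ln(4/(\delta p_{\min}))/(2k)}.
\]

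\textbf{Step 3: sum the weighted deviations.} Dividing by $p_i\ge p_{\min}$ and summing over at most $1/p_{\min}$ atoms gives
\[
\sum_i \frac{|f_i-p_i|}{p_i}\le p_{\min}^{-2}\sqrt{\ln(4/(\delta p_{\min}))/(2k)}.
\]
Since $\ln(4/(\delta p_{\min}))\le c\ln(2/\delta)$ for a constant $c$ depending only on $p_{\min}$ (for $\delta<1$), this bound has the form $C_2\sqrt{\ln(2/\delta)/k}$ with $C_2=\sqrt{c/2}\,p_{\min}^{-2}$.

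An alternative route avoids the union bound. Write $\sum_i |f_i-p_i|/p_i$ as a single function of the $k$ samples and check its bounded-differences constant. This route is delicate, because changing one sample can perturb $f_i/p_i$ by up to $1/(kp_{\min})$ across two atoms.

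\textbf{Main obstacle.} The difficulty is that the sum runs over a random, data-dependent index set, and $1/p_i$ is unbounded for tail atoms. Without a lower bound such as $p_{\min}$, taken from the top-$M$ selection or the Zipf truncation, the statement cannot hold uniformly: rare atoms that are seen once have $f_i/p_i$ of order $1/(kp_i)$, which can be huge. My first attempt would therefore make the $p_{\min}$ assumption explicit and fold it into $C_2$. If a tail-robust version is desired, I would instead split the sum into atoms with $p_i\ge k^{-1/2}$, handled as above, and the remaining atoms, handled through their total mass via the earlier bounds on $U_k$. That split, however, likely yields a weaker rate than $k^{-1/2}$.
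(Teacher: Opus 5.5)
Your core argument is correct, and it is the concrete version of what the paper only gestures at. You apply Hoeffding to each frequency, take a union bound over the at most $1/p_{\min}$ atoms of mass at least $p_{\min}$ with $\delta'=\delta p_{\min}/2$, and divide by $p_i\ge p_{\min}$. The paper's proof simply asserts that the bound ``follows from standard concentration inequalities for multinomial distributions'' because the $f_i$ are unbiased. It never deals with the $1/p_i$ weights or the data-dependent index set.

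You are right that a floor on $p_i$ is needed, and the situation is worse than you state. Under the paper's own infinite-support Zipf model the lemma is false, and no split at $p_i\ge k^{-1/2}$ rescues it. Since $H(v)\propto v^{(\alpha-1)/\alpha}$, we get $kH(1/(2k))\asymp k^{1/\alpha}\to\infty$. So with probability tending to one, some draw hits an atom with $p<1/(2k)$. That atom contributes $|f-p|/p\ge f/p-1>1$, while the right-hand side tends to $0$. Your remark that the split ``likely yields a weaker rate'' is therefore too optimistic: some hypothesis such as finite support with minimum mass $p_{\min}$ is genuinely needed. It also cannot be extracted from the top-$M$ filter, which selects from a candidate pool and does not truncate $P_\text{B}$ itself. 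State it as an added assumption, and make explicit that $C_2$ scales like $p_{\min}^{-2}$ up to a logarithmic factor.

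The step that actually fails is in Step 1: weighting atoms by their multiplicity is not ``harmless after normalization.'' Read literally, $\sum_{i=1}^k$ runs over the $k$ draws. Each distinct atom $\boldsymbol{y}$ then occurs $kf_{\boldsymbol{y}}$ times, and the sum equals $k\sum_{\boldsymbol{y}\in S}f_{\boldsymbol{y}}|f_{\boldsymbol{y}}-p_{\boldsymbol{y}}|/p_{\boldsymbol{y}}$. For two atoms of mass $1/2$ this is exactly $2k|f_a-1/2|$, which is of order $\sqrt{k}$ rather than $k^{-1/2}$. No $p_{\min}$ assumption changes that.

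Your Step 3 in fact bounds the \emph{unweighted} sum over distinct atoms: you sum at most $1/p_{\min}$ terms, none carrying a multiplicity factor. So the computation is fine for that quantity, but you must say explicitly that you read the lemma as a sum over distinct observed responses, instead of claiming the repeats are harmless. This is also the reading Lemma 3 requires. Its term $\sum_{i=1}^k P_\text{B}(\boldsymbol{y}_\text{B}^{(i)}|\boldsymbol{x})\ln(\cdot)$ is part of the KL divergence only when the $\boldsymbol{y}_\text{B}^{(i)}$ are distinct.

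With these two amendments (distinct atoms, and finite support with minimum mass $p_{\min}$) your proof is complete. Without them the statement does not hold.
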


\begin{proof}
This follows from standard concentration inequalities for multinomial distributions. The empirical frequencies $f_i$ are unbiased estimators of the true probabilities $p_i$, and their deviations can be bounded using results from statistical learning theory.
\end{proof}

\begin{lemma}
Using the bound on $U_k$ and the inequality $-\ln(1-z) \leq \frac{z}{1-z}$ for $z \in [0,1)$:
\begin{equation}
-\ln(1 - U_k) \leq \frac{U_k}{1-U_k} \leq \frac{C_1}{k^{\gamma}},
\end{equation}
where $\gamma = \beta - \varepsilon$ and $C_1$ is a constant.
\end{lemma}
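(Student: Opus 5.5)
The argument has two pieces: an elementary inequality for $-\ln(1-z)$, then the probabilistic bound on $U_k$ from the first two lemmas of this subsection. One caveat up front: the second inequality only holds with high probability and for $k$ large, with $C_1$ absorbing those dependencies. As written, $\gamma = \beta-\varepsilon$ also requires $\gamma \le 1/2$, as explained below.

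\textbf{Step 1: the elementary inequality.} Set $g(z) = \frac{z}{1-z} + \ln(1-z)$ on $[0,1)$. Then $g(0)=0$ and
\begin{equation}
g'(z) = \frac{1}{(1-z)^2} - \frac{1}{1-z} = \frac{z}{(1-z)^2} \ge 0,
\end{equation}
so $g \ge 0$ on $[0,1)$. Since $U_k \in [0,1)$ whenever at least one sample has been drawn (the observed response carries positive mass), taking $z = U_k$ gives the first inequality, $-\ln(1-U_k) \le \frac{U_k}{1-U_k}$.

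\textbf{Step 2: bound $U_k$ itself.} Combine the expectation bound $\mathbb{E}[U_k] \le k^{-(\beta-\varepsilon)}$ for $k > k_0$ with the Hoeffding deviation lemma. With probability at least $1-\delta/2$,
\begin{equation}
U_k \le k^{-\gamma} + \sqrt{\frac{\ln(2/\delta)}{2k}}.
\end{equation}
Since $\beta \le 1$ for missing-mass exponents, choose $\varepsilon$ so that $\gamma \le 1/2$. Then $k^{-1/2} \le k^{-\gamma}$, and
\begin{equation}
U_k \le \Bigl(1 + \sqrt{\ln(2/\delta)/2}\Bigr) k^{-\gamma} =: C_1' k^{-\gamma}.
\end{equation}

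\textbf{Step 3: control the denominator $1-U_k$.} This is the main obstacle, since $U_k/(1-U_k)$ blows up as $U_k \to 1$. Enlarge $k_0$ so that $C_1' k^{-\gamma} \le 1/2$ for all $k > k_0$. On the good event this gives $1 - U_k \ge 1/2$, hence
\begin{equation}
\frac{U_k}{1-U_k} \le 2 U_k \le 2C_1' k^{-\gamma}.
\end{equation}
Set $C_1 = 2C_1'$. For finitely many small $k$ the bound is not claimed, or one enlarges $C_1$ if $U_k$ is bounded away from $1$ there.

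\textbf{What $C_1$ depends on, and the $\gamma > 1/2$ case.} The constant $C_1$ depends on $\delta$ and on the Zipf parameter through $k_0$, and the statement holds with probability at least $1-\delta/2$ for $k > k_0$. If one insists on $\gamma > 1/2$, the Hoeffding term dominates and the rate degrades to $k^{-1/2}$. So I would state the lemma with $\gamma = \min(\beta-\varepsilon, 1/2)$, which is the effective rate the proof actually delivers.
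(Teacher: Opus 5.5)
Your argument follows the paper's route. You combine the expectation bound with the Hoeffding deviation to get $U_k \le C'k^{-\gamma}$ with probability at least $1-\delta/2$, then apply $-\ln(1-z)\le z/(1-z)$ and control $1-U_k$ for large $k$. Your additions are sound: you verify the elementary inequality, and you force $C'k^{-\gamma}\le 1/2$, which makes explicit the paper's unjustified step $\frac{C'k^{-\gamma}}{1-C'k^{-\gamma}}\le C_1k^{-\gamma}$.

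Your caveat is also correct, and it fixes a slip in the paper's proof. The paper claims that $\beta>1/2$ makes $k^{-(\beta-\varepsilon)}$ the dominant term. In fact, when $\beta-\varepsilon>1/2$ the $\sqrt{\ln(2/\delta)/(2k)}$ term dominates, so absorbing it into $C'k^{-\gamma}$ is legitimate only for $\gamma\le 1/2$, exactly as you say. Obtaining a rate $\gamma>1/2$ would require a sharper, variance-sensitive concentration bound for $U_k$ than Hoeffding.
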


\begin{proof}
From Lemma 1 and Lemma 2, we have with probability at least $1-\delta/2$:
\begin{equation}
U_k \leq \mathbb{E}[U_k] + \sqrt{\frac{\ln(2/\delta)}{2k}} \leq k^{-(\beta-\varepsilon)} + \sqrt{\frac{\ln(2/\delta)}{2k}}.
\end{equation}

For a typical Zipf distribution in LLM, $\beta > 1/2$, making the polynomial term $k^{-(\beta-\varepsilon)}$ the dominant one for sufficiently large $k$. Thus, we can bound $U_k \leq C'k^{-(\beta-\varepsilon)}$ for some constant $C'$. Applying the inequality $-\ln(1-z) \leq \frac{z}{1-z}$, we get:

\begin{equation}
-\ln(1 - U_k) \leq \frac{U_k}{1-U_k} \leq \frac{C'k^{-(\beta-\varepsilon)}}{1-C'k^{-(\beta-\varepsilon)}} \leq \frac{C_1}{k^{\gamma}},
\end{equation}

\noindent{where $\gamma = \beta - \varepsilon$ and $C_1$ is a constant that depends on $C'$.}
\end{proof}

Now we are ready to present the proof of our main theorem.

\begin{theorem}[Sample-Finited Adversarial Distillation]
Let $P_{\text{B}}(\boldsymbol{y}|\boldsymbol{x})$ be the true output distribution of a black-box LLM $\mathcal{M}_\text{B}$ for input prompt $\boldsymbol{x}$. For $k$ samples $\{\boldsymbol{y}_\text{B}^{(i)}\}_{i=1}^k$ drawn from $P_{\text{B}}(\boldsymbol{y}|\boldsymbol{x})$, the empirical distribution $\hat{P}_{\text{B}}(\boldsymbol{y}|\boldsymbol{x})$ constructed from these samples satisfies, with probability at least $1-\delta$:

\begin{equation}
\mathcal{D}_{\text{KL}}(P_{\text{B}}(\boldsymbol{y}|\boldsymbol{x}) \parallel \hat{P}_{\text{B}}(\boldsymbol{y}|\boldsymbol{x})) \leq \frac{C_1}{k^{\gamma}} + C_2\sqrt{\frac{\ln(1/\delta)}{k}},
\end{equation}

\noindent{where $\gamma > 0$ depends on the power-law characteristics of $P_{\text{B}}$, and $C_1, C_2$ are constants.}
\end{theorem}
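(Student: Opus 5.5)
The plan is to assemble the theorem directly from Lemmas 1--5 by a union bound. The starting point is the decomposition in Lemma 3, which bounds $\mathcal{D}_{\text{KL}}(P_{\text{B}} \parallel \hat{P}_{\text{B}})$ by two terms. The first is the missing-mass term $-\ln(1-U_k)$. The second is the deviation term $\frac{1}{1-U_k}\sum_{i=1}^k |f_i-p_i|/p_i$. I will bound each term separately on a high-probability event, each event failing with probability at most $\delta/2$, and then intersect the two events to obtain the overall confidence $1-\delta$.

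\textbf{Missing-mass term.} Lemma 1 gives $\mathbb{E}[U_k]\le k^{-(\beta-\varepsilon)}$ for $k>k_0$, with $\beta\ge(\alpha-1)/\alpha>0$ by the Zipf assumption. Lemma 2 (bounded differences) then shows that, with probability at least $1-\delta/2$, $U_k$ exceeds its mean by at most $\sqrt{\ln(2/\delta)/(2k)}$. Lemma 5 packages these two facts into $-\ln(1-U_k)\le C_1 k^{-\gamma}$ with $\gamma=\beta-\varepsilon>0$. I take $\varepsilon<\beta$ so that $\gamma>0$. This choice makes $\gamma$ depend only on the power-law exponent, as the statement requires.

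\textbf{Deviation term.} On the same event, $U_k\le C'k^{-\gamma}$. So for $k$ large, say once $C'k^{-\gamma}\le 1/2$, the prefactor satisfies $1/(1-U_k)\le 2$. Lemma 4 gives $\sum_i|f_i-p_i|/p_i\le C_2\sqrt{\ln(2/\delta)/k}$ with probability at least $1-\delta/2$. I absorb the factor $2$ into $C_2$. For $\delta\le 1/2$, I also absorb the replacement of $\ln(2/\delta)$ by $\ln(1/\delta)$, using $\ln(2/\delta)\le 2\ln(1/\delta)$. For small $k$ (below $k_0$ or the threshold for $U_k\le 1/2$), I enlarge $C_1$ so that the bound holds trivially. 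This step is needed to the extent that the KL divergence is finite under the smoothing convention of Lemma 3.

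\textbf{Main obstacle.} I expect the main difficulty to be the bookkeeping of the Lemma 5 step, where the fluctuation term $\sqrt{\ln(2/\delta)/(2k)}$ must be dominated by $k^{-\gamma}$. That domination uses $\beta>1/2$; if it fails, I would instead route the fluctuation into the $C_2\sqrt{\ln(1/\delta)/k}$ term, which has exactly the right form. The final step is the union bound over the two events, giving total failure probability at most $\delta$, and summing the two bounds yields the stated inequality.
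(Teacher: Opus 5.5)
Your plan is essentially the paper's own proof. Both combine Lemma 3's decomposition with Lemmas 5 and 4, each on an event of probability at least $1-\delta/2$, take a union bound, and absorb the $1/(1-U_k)$ prefactor and the $\ln(2/\delta)$ versus $\ln(1/\delta)$ discrepancy into constants; you do that absorption more explicitly than the paper. One correction to your obstacle paragraph, though: $k^{-1/2}\le C k^{-\gamma}$ requires $\gamma\le 1/2$, so when $\beta>1/2$ the fluctuation term $\sqrt{\ln(2/\delta)/(2k)}$ is the dominant one (the paper's Lemma 5 also gets this backwards), and absorbing it into $C_1$ would in any case make $C_1$ depend on $\delta$; your fallback of routing it into the $C_2\sqrt{\ln(1/\delta)/k}$ term via $-\ln(1-U_k)\le 2U_k$ for $U_k\le 1/2$ is therefore the step that actually makes the bound go through.
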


\begin{proof}
By combining Lemma 3 with Lemmas 4 and 5, and using the union bound, we get with probability at least $1-\delta$:

\begin{align}
\mathcal{D}_{\text{KL}}(P_{\text{B}} \parallel \hat{P}_{\text{B}}) &\leq -\ln(1 - U_k) + \frac{1}{1-U_k}\sum_{i=1}^k \frac{|f_i - p_i|}{p_i} \\
&\leq \frac{C_1}{k^{\gamma}} + C_2\sqrt{\frac{\ln(2/\delta)}{k}},
\end{align}

\noindent{where $\gamma = \beta - \varepsilon \geq \frac{\alpha-1}{\alpha} - \varepsilon$, and $C_1, C_2$ are constants. Since $U_k \to 0$ as $k \to \infty$, the term $1/(1-U_k)$ approaches 1 and can be absorbed into a new constant. After adjusting constants to reflect the total probability bound of $1-\delta$, we arrive at the final result shown in the theorem.}

\end{proof}

\twocolumn

\section{Prompt Templates}

We provide the specific prompt templates used in our experiments for both response generation and evaluation.

\subsection{Prompts for Response Generation and Reliability Estimation}

Following the previous works, we utilize distinct prompt structures tailored to the specific formatting requirements of each for the LLaMa2 series, LLaMa3 series, Qwen3 series and GPT-4, respectively. The \{question\} placeholder is dynamically replaced with the input question from the question-answer datasets.

\definecolor{myblack}{RGB}{0,0,0}
\definecolor{myred}{RGB}{200,0,0}
\definecolor{mygray}{gray}{0.97}
\lstdefinestyle{promptstyle}{
  basicstyle=\ttfamily\normalsize,
  backgroundcolor=\color{mygray},
  breaklines=true,
  columns=fullflexible,
  escapeinside={(*@}{@*)},  
  showstringspaces=false,
  frame=none
}

\newtcolorbox{blacktitlebox}[1]{
  colback=white,
  colframe=black,
  coltitle=white,
  title=#1,
  fonttitle=\bfseries,
  colbacktitle=black,
  boxrule=0.5pt,
  toptitle=1mm,
  bottomtitle=1mm,
}

\begin{blacktitlebox}{Prompt for Response Reliability Estimation}

\textbf{LLaMa2 Series Prompt}
\begin{lstlisting}[style=promptstyle, numbers=none]
Answer the question concisely. Q: (*@\textcolor{red}{\{question\}}@*) A:
\end{lstlisting}

\textbf{LLaMa3 Series Prompt}
\begin{lstlisting}[style=promptstyle, numbers=none]
<|eot_id|>
<|start_header_id|>user<|end_header_id|>
Answer the question concisely. Q: (*@\textcolor{red}{\{question\}}@*) A:<|eot_id|>
\end{lstlisting}

\textbf{GPT-4 Prompt}
\begin{lstlisting}[style=promptstyle, numbers=none]
{"role": "user", "content": "Answer the question concisely. Q: (*@\textcolor{red}{\{question\}}@*) A:"}
\end{lstlisting}

\textbf{Qwen3 Series Prompt}
\begin{lstlisting}[style=promptstyle, numbers=none]
<|im_start|>user
Answer the question concisely.  
Q: (*@\textcolor{red}{\{question\}}@*) A:<|im_end|>
<|im_start|>assistant
\end{lstlisting}

\end{blacktitlebox}

\subsection{System Prompt for LLM-as-a-Judge}

To automatically evaluate the correctness of generated responses for datasets like BioASQ and TriviaQA, we employ an LLM-as-a-Judge approach. The following system prompts are used to instruct the judge model (e.g., GPT-4) to act as an impartial evaluator. The prompt provides clear instructions, examples of both correct (1) and incorrect (0) evaluations, and the final template used for batch processing.

\begin{blacktitlebox}{System Prompt for LLM-as-Judge}

\texttt{\textbf{System:} Your task is to determine if the provided answer is true or false based solely on the ground truth answers given to you in the format [’answer 1’, ’answer 2’, ...]. DO NOT rely on your memory; only use the information provided after this instruction. Respond with 1 if the predicted answer is correct, which means semantically consistent with any of the ground truth answers, otherwise respond with 0. Respond with just 0 or 1, and DO NOT include anything else in your response. This is the only instruction you need to follow.}

\texttt{\textbf{User:} Input: Who is elected as the vice president of india in 2017?}

\texttt{\textbf{Ground Truth:} [‘Venkaiah Naidu’, ‘Muppavarapu Venkaiah Naidu’]}

\texttt{\textbf{Provided Answer:} M. Venkaiah Naidu}

\texttt{\textbf{Assistant:}1}

\texttt{\textbf{User:} Input: who sings you are a magnet and i am steel?}

\texttt{\textbf{Ground Truth:} [‘Walter Egan’]}

\texttt{\textbf{Provided Answer:} The song ‘You Are a Magnet and I Am Steel’ is performed by the band The 1975.}

\texttt{\textbf{Assistant:}0}
\vspace{1em}

\texttt{\textbf{User:}Input: \textcolor{red}{\{Question\}}}

\texttt{\textbf{Ground Truth:} \textcolor{red}{\{Your Ground Truth List\}}}

\texttt{\textbf{Provided Answer:} \textcolor{red}{\{The Answer to be Judged\}}}

\texttt{\textbf{Assistant:}}
\end{blacktitlebox}

\section{Method Description}

\subsection{Distillation Dataset Generation}

This section provides a detailed description of our proposed methodology. Algorithm \ref{alg1} outlines the process for constructing our high-quality distillation dataset. This process begins by creating a diverse set of prompts, collated from both large-scale conversational WildChat (open-domain) and task-specific evaluation dataset (in-domain). For each prompt, we leverage the black-box LLM to generate a set of candidate responses using a dual-temperature sampling strategy to capture both precision and diversity. These responses then undergo a rigorous filtering process to ensure that each prompt in the final distillation dataset is paired with a set of representative responses.

\begin{algorithm}[!htbp]
\caption{Distillation Dataset Generation}
\label{alg1}
\textbf{Input}: A collection of $N$ prompts from mixed multi-source datasets\\
\textbf{Model}: Black-box LLM $\mathcal{M}_\text{B}$ with specified generation parameters (e.g., temperature, top-$M$ sampling) \\
\textbf{Output}: Distillation dataset derived from target LLM
\begin{algorithmic}[1]
\STATE Sample $N/2$ prompts from WildChat and $N/2$ from the evaluation dataset
\STATE Merge and shuffle to form the mixed prompt set
\FOR{each prompt}
    \STATE Generate one response with low temperature ($T \approx 0$) using the black-box LLM
    \STATE Generate multiple responses with high temperature ($T > 0.5$) using the black-box LLM 
\ENDFOR
\FOR{each prompt and its response set}
    \STATE Discard high-temp responses that are too short, repetitive, or high in perplexity
    \STATE Keep $1$ low-temp + up to $M-1$ high-quality high-temp responses
    \IF{valid response count $< M$}
        \STATE Discard the sample
    \ENDIF
\ENDFOR
\STATE \textbf{return} 
\end{algorithmic}

\end{algorithm}

\subsection{Proxy Model Optimization}

Algorithm \ref{alg2} describes the adversarial distillation process for optimizing the proxy model $\mathcal{M}_\text{p}$ using the proposed DisAAD framework. It establishes an adversarial training dynamic between the LoRA-based proxy model (the generator) and the discriminator $\mathcal{M}_\text{D}$. The training objective is to align the proxy model's responses with the high-probability region of the output distribution of the target black-box LLM. 

To maintain a stable and effective adversarial process, the training proceeds in a carefully balanced alternating fashion: for every single update to the proxy model, the discriminator is first updated twice, which ensures that the discriminator can provide a robust and informative learning signal by refining its ability to distinguish between responses from the black-box model's distribution (real) and those from the proxy model's distribution (fake). Following this, the proxy model's trainable parameters are updated based on a composite loss function, which combines a standard token-level task loss with a sequence-level adversarial loss provided by the discriminator. This iterative process drives the proxy model to generate responses that are increasingly indistinguishable from those of the target black-box LLM.

\begin{algorithm}[!htbp]
\caption{The Optimized Proxy Model using DisAAD}
\label{alg2}
\textbf{Input}: Supervised distillation dataset $\{ (\boldsymbol{x}^{(i)}, \{\boldsymbol{y}_\text{B}^{(i,j)}\}_{j=1}^M) \}_{i=1}^{N}$ \\
\textbf{Model}: Proxy model $\mathcal{M}_\text{p}$ with LoRA parameters $\theta$; Discriminator $\mathcal{M}_\text{D}$ with parameters $\phi$ \\
\textbf{Output}: The optimized proxy model $\mathcal{M}_\text{p}$ with optimized parameters $\theta$ 
\begin{algorithmic}[1]
\STATE Divide $\mathcal{D}_\text{distill}$ into $\mathcal{D}_\text{train}$ and $\mathcal{D}_\text{val}$
\FOR{each training iteration}
    \STATE Sample a batch of prompts $\{\boldsymbol{x}^{(i)}\}_{i \in \mathcal{B}}$ from $\mathcal{D}_\text{train}$, and retrieve all corresponding target responses $\{\{\boldsymbol{y}_\text{B}^{(i,j)}\}_{j=1}^M\}_{i \in \mathcal{B}}$
    \STATE Generate $M$ responses for each prompt: $\{\{\boldsymbol{y}_\text{p}^{(i,j)}\}_{j=1}^M\}_{i \in \mathcal{B}} \gets \mathcal{M}_\text{p}(\{\boldsymbol{x}^{(i)}\}_{i \in \mathcal{B}}; \theta)$
    
    \STATE \textit{// Update Discriminator $\mathcal{M}_\text{D}$}
    \STATE Define real pairs $\mathcal{P}_\text{real} \gets \{ (\boldsymbol{x}^{(i)}, \boldsymbol{y}_\text{B}^{(i,j)}) \mid i \in \mathcal{B}, j \in [1, M] \}$
    \STATE Define fake pairs $\mathcal{P}_\text{fake} \gets \{ (\boldsymbol{x}^{(i)}, \boldsymbol{y}_\text{p}^{(i,j)}) \mid i \in \mathcal{B}, j \in [1, M] \}$
    \STATE Compute discriminator loss $\mathcal{L}_\text{D}(\phi)$ over all pairs in $\mathcal{P}_\text{real}$ and $\mathcal{P}_\text{fake}$
    \STATE Update discriminator parameters: $\phi \gets \phi - \eta_D \nabla_\phi \mathcal{L}_\text{D}(\phi)$
    
    \STATE \textit{// Update Proxy Model $\mathcal{M}_\text{p}$}
    \STATE Compute task loss $\mathcal{L}_\text{task}(\theta)$ over the set of real pairs $\mathcal{P}_\text{real}$
    \STATE Compute regularization (adversarial) loss $\mathcal{L}_\text{reg}(\theta)$ over the set of fake pairs $\mathcal{P}_\text{fake}$
    \STATE Aggregate total loss $\mathcal{L}(\theta) \gets \mathcal{L}_\text{task}(\theta) + \lambda \mathcal{L}_\text{reg}(\theta)$
    \STATE Update proxy model's LoRA parameters: $\theta \gets \theta - \eta_p \nabla_\theta \mathcal{L}(\theta)$
    
    \STATE Periodically validate $\mathcal{M}_\text{p}$ on $\mathcal{D}_\text{val}$ and save the best-performing checkpoint
\ENDFOR
\STATE \textbf{return} 
\end{algorithmic}
\end{algorithm}

\subsection{Uncertainty Description}

\begin{figure*}[h]
\centering
\includegraphics[width=0.95\textwidth]{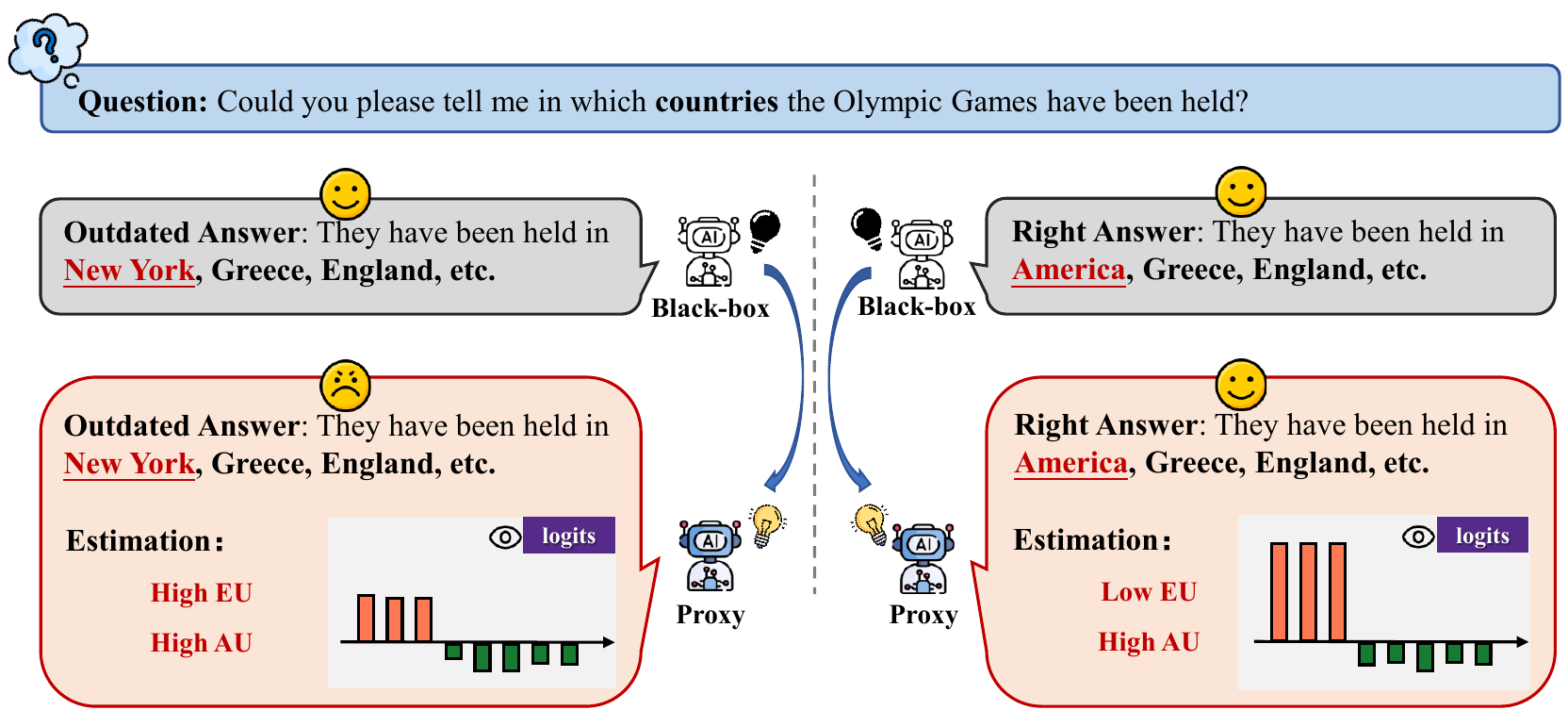} 
\caption{Overview of proxy-guided uncertainty quantification. For a given response from the target LLM, we first utilize the proxy model to reproduce the responses, then extract the corresponding logits and estimate the decoupled uncertainty. For an incorrect answer (e.g., ``New York''), the proxy model identifies this scenario as High EU and High AU, since the LLM mistakenly identifies ``New York'' as one of the countries that have held the Olympic Games. Conversely, for a correct answer (e.g., ``America''), the proxy model identifies this scenario as Low EU and High AU. This combination suggests that the target model is confident in its knowledge (Low EU) while also recognizing that the question has multiple correct answers (High AU), thus indicating the response is reliable.}
\label{flow3}
\end{figure*}

Here, we also give more explanation about the decoupled uncertainty based on evidential deep learning. As shown in Fig. \ref{flow2} and Fig. \ref{flow3}, the proposed work can effectively quantify the LLM uncertainty even when there are multiple correct responses. Four different scenarios considered for black-box LLM uncertainty are listed as follows: 

\begin{itemize}
    \item High AU, High EU: the black-box LLM lacks domain knowledge and produces uncertain predictions; 
    \item Low AU, High EU: it is an overconfidence scenario where the target LLM generates confident prediction despite knowledge gaps; 
    \item Low AU, Low EU: it is an optimal reliability scenario with both strong knowledge and high prediction confidence; 
    \item High AU, Low EU: the black-box LLM knows more than one reasonable answer.
\end{itemize}

\section{More Experimental Details}

\subsection{Dataset Description}

\subsubsection{TruthfulQA}

TruthfulQA is a benchmark designed to evaluate the truthfulness of language model outputs. It contains questions that are adversarially selected to elicit false or misleading answers from models. The dataset spans multiple categories such as health, law, and finance, emphasizing factual consistency over plausibility. Each question has a reference answer annotated for truthfulness, making it suitable for both uncertainty and calibration evaluation.

\subsubsection{BioASQ}
BioASQ is a biomedical question answering benchmark comprising expert-annotated questions based on PubMed articles. We use the factoid subset of the dataset, which consists of questions with short, factual answers. BioASQ is particularly challenging due to domain-specific terminology and the requirement for precise biomedical knowledge. Each question is paired with a gold-standard answer list, enabling both exact match and ranking-based evaluation.

\subsubsection{TriviaQA}
TriviaQA is an open-domain question answering dataset containing over $650K$ question-answer pairs sourced from trivia websites and verified using evidence documents from Wikipedia. The questions are naturally complex and often require multi-hop reasoning. We use the unfiltered open-domain version, which pairs each question with multiple evidence documents, making it suitable for evaluating answer faithfulness and uncertainty under broader context exposure.

\subsection{Evaluation Metrics}

The evaluation is formulated as a binary classification task, and conducted on question-answering benchmarks from different domains, including life sciences (BioASQ \citep{tsatsaronis2015overview}), truthfulness (TruthfulQA \citep{lin2021truthfulqa}) and knowledge seeking (TriviaQA \citep{joshi2017triviaqa}). The responses generated by LLM with ``$\text{BLEURT}$$>$0.5'' or ``LLM-Judge=1'' are considered the correct answers \citep{xiong2024efficient}. During the testing phase, for each dataset, we randomly selected $800$ samples to evaluate the performance of different methods. The performance is quantified by the Area Under the Receiver Operating Characteristic curve (AUROC) and the Area Under the Precision-Recall curve (AUPR) assess its discriminative power, while the Expected Calibration Error (ECE) evaluates its calibration accuracy.

\subsection{Target and Proxy Models} 

We employ Llama2-70B-Instruct, Qwen3-32B and GPT-4 (i.e., GPT-4-0613) as the target LLMs, and utilize Llama3-1B-Instruct, Llama3-3B-Instruct, Llama3-8B-Instruct as proxy models. Although our method is adaptable to various open-source proxy models, we standardize on the LLaMA3 series to ensure a controlled and consistent basis for our comparative analysis. We run all the experiments on 2-4 NVIDIA GeForce RTX 4090 GPUs with parallel processing. It is worth noting that we did not carefully select the hyperparameters, we believe that by making careful adjustments, better results can be achieved.

\subsection{Baseline Methods}

We demonstrate the response reliability estimation performance of our proposed framework in both black-box and white-box settings. For the multi-sample methods applicable in the black-box setting, including LN-Entropy (LNE), Semantic Entropy (SE), Discrete Semantic Entropy (DSE), and Lexical Similarity (LeS), we follow the default setting in the original paper by generating $10$ candidate responses with a temperature of $0.5$ to derive the uncertainty scores. For LeS, Rouge-L is utilized as the similarity metric, whereas for SE and DSE, Deberta-Large-MNLI model is employed to form semantic clusters. Furthermore, for the single-sample methods applicable in the white-box setting, such as probability, entropy and LogTokU, we produce the single most probable response via greedy decoding strategy, then retain the token-level logits and compute the uncertainty scores.

\subsection{Distillation Dataset Collection}

To construct a distillation dataset that is both diverse and domain-specific, we strategically combine prompts from in-domain and out-of-domain datasets. Half are drawn from a general dataset rich in real-world interactions (e.g., WildChat), while the other half originate from the relevant evaluation dataset. This hybrid approach balances domain relevance with generalizability, enabling the proxy model to align closely with the target LLM's output distribution. For each input prompt, we generate $15$ candidate responses: one response with low temperature ($T \approx 0.01$) and $14$ diverse responses with high temperature ($T>0.5$). These candidates then undergo a systematic filtering process. First, we discard high-temperature responses that fail baseline quality checks, specifically insufficient length (e.g., fewer than $15$ words) or excessive sequence repetition. Following this, we compute the cosine similarity of each valid high-temperature response to its corresponding low-temperature response. We then retain the low-temperature response and the top nine high-temperature responses based on this similarity ranking. An input prompt is discarded entirely if it fails to yield this complete set of $10$ valid responses after this procedure. This procedure is designed to capture the characteristic output distribution of the black-box model, rather than to filter exclusively for factual correctness. The final dataset consists of these newly generated responses paired with their corresponding original prompts, forming a collection of prompt-response pairs. Our experiments demonstrate that an effective proxy model can be obtained based on the proposed DisAAD with merely $1K$ samples, making our method not only powerful but also computationally efficient.

\section{More Experimental Results}

\begin{table*}[htbp]
\centering
\resizebox{2\columnwidth}{!}{
\begin{tabular}{l *{7}{cc}}
\toprule
\multirow{2.5}{*}{Method} 
& \multicolumn{2}{c}{$K=1$} 
& \multicolumn{2}{c}{$K=5$} 
& \multicolumn{2}{c}{$K=10$} 
& \multicolumn{2}{c}{$K=15$} 
& \multicolumn{2}{c}{$K=20$} 
& \multicolumn{2}{c}{$K=25$} 
& \multicolumn{2}{c}{$K=ALL$} \\
\cmidrule(lr){2-3} \cmidrule(lr){4-5} \cmidrule(lr){6-7} \cmidrule(lr){8-9} \cmidrule(lr){10-11} \cmidrule(lr){12-13} \cmidrule(lr){14-15}
 & AUROC$\uparrow$ & AUPR$\uparrow$ & AUROC$\uparrow$ & AUPR$\uparrow$ & AUROC$\uparrow$ & AUPR$\uparrow$ 
& AUROC$\uparrow$ & AUPR$\uparrow$ & AUROC$\uparrow$ & AUPR$\uparrow$ & AUROC$\uparrow$ & AUPR$\uparrow$
& AUROC$\uparrow$ & AUPR$\uparrow$ \\
\midrule
Probability     & 0.5503 & 0.6734 & 0.5739 & 0.6785 & 0.5611 & 0.6751 & 0.5548 & 0.6747 & 0.5533 & 0.6746 & 0.5533 & 0.6727 & 0.5707 & 0.6720 \\
Entropy  & 0.5849 & 0.6946 & 0.6045 & 0.7010 & 0.5907 & 0.6945 & 0.5798 & 0.6908 & 0.5757 & 0.6914 & 0.5738 & 0.6899 & 0.5967 & 0.6924 \\
Logtoku  & 0.5898 & 0.6943 & 0.6320 & 0.7185 & 0.6370 & 0.7178 & 0.6351 & 0.7189 & 0.6326 & 0.7177 & 0.6324 & 0.7199 & 0.6256 & 0.7253 \\
DisAAD   & \textbf{0.7046} & \textbf{0.7848} & \textbf{0.7105} & \textbf{0.7868} & \textbf{0.7038} & \textbf{0.7808} & \textbf{0.6946} & \textbf{0.7756} & \textbf{0.6905} & \textbf{0.7764} & \textbf{0.6866} & \textbf{0.7765} & \textbf{0.7046} & \textbf{0.7848} \\
\bottomrule
\end{tabular}}
\caption{The comparison reliability estimation performance of different numbers of tokens with high uncertainty on BioASQ dataset. The proxy model and the target LLM are LLaMA3-3B and LLaMA2-70B, respectively. ``K=ALL'' denotes all the generated tokens (128 tokens) are used for estimation.}
\label{ab2}
\end{table*}

\subsection{Efficiency of the Small-Size Distillation Dataset and Discriminator}

To demonstrate the training efficiency of our proposed DisAAD framework, we analyze the performance of small-size distillation dataset and the generation-discrimination architecture. As shown in Fig. \ref{ab1}(a), both AUROC and AUPR metrics rapidly improve with initial samples and stabilize after approximately $50$ distillation prompts (equivalent to $0.5K$ distillation data). Notably, performance slightly decreases before final convergence, suggesting that the proxy model first captures essential uncertainty patterns from limited data, then experiences minor overfitting to noise present in larger datasets. The calibration results in Fig. \ref{ab1}(b) further support this finding, as the ECE values remain consistent across different distillation dataset sizes, indicating that additional data beyond the optimal point does not meaningfully improve calibration quality.
 
Furthermore, the effectiveness of the generation-discrimination architecture is demonstrated in Fig. \ref{ab2}, where we employ LLaMA3-3B as the generator and GPT-4 as the discriminator. To quantitatively measure the performance of the distilled proxy model, we define the prediction gap as the difference in discriminator confidence when classifying outputs from the proxy model versus the target LLM. As shown in Fig. \ref{ab2}(a), this gap progressively narrows throughout the training process, eventually stabilizing at approximately $0.0050$. This convergence indicates the discriminator can no longer effectively distinguish between the outputs of both models, confirming that the proxy model has successfully learned to mimic the target LLM's response patterns. The semantic similarity analysis in Fig. \ref{ab2}(b) further supports this finding, illustrating the distribution of similarity scores for a number of $100$ validation samples at different training steps. The clear shift toward similarity scores exceeding $0.9$ between steps $30$ and $70$ demonstrates that the proxy model achieves high consistency with the target model's output distribution. Overall, empirical results demonstrate that the proposed DisAAD framework requires only a small distillation dataset, with the proxy model effectively approximating the target LLM's performance during the early stages of adversarial training, thus substantially reducing the query cost associated with distillation.

\subsection{Discussion of the Choice of Top-$K$ Tokens with High Uncertainty}

Table \ref{ab2} presents a comparative analysis of reliability estimation results using the Top-$K$ tokens with the highest uncertainty. It is observed that for all tested choices of $K$, our DisAAD method maintains a substantial performance margin over all baseline methods. In addition, the reliability estimation achieved using a small subset of tokens (e.g., $K=5$ or $K=10$) is not only comparable to but even slightly superior to that achieved using the entire sequence of generated tokens ($K=\text{ALL}$). This observation aligns with recent research suggesting that during an LLM's inference, only a small portion of tokens are important for estimating reliability. Therefore, focusing on these key tokens with higher uncertainty can provide more focused and clearer signals, while considering the entire token sequence as a whole might potentially affect the accuracy of the estimation. In this paper, we uniformly set $K$ as $20\%$ of the total token count for the generated response for all datasets.

\subsection{Effectiveness of the Logits-based Uncertainty Estimation}

\begin{table*}[htbp]
\centering
\resizebox{2\columnwidth}{!}{
\begin{tabular}{ccc *4{cc}}
\toprule
\multirow{2.5}{*}{Target} & \multirow{2.5}{*}{Method} & \multirow{2.5}{*}{$O(1)$} & \multicolumn{2}{c}{TruthfulQA} & \multicolumn{2}{c}{BioASQ} & \multicolumn{2}{c}{TriviaQA} & \multicolumn{2}{c}{Average} \\
\cmidrule(lr){4-5}\cmidrule(lr){6-7}\cmidrule(lr){8-9}\cmidrule(lr){10-11}
& & & AUROC$\uparrow$ & AUPR$\uparrow$ & AUROC$\uparrow$ & AUPR$\uparrow$ & AUROC$\uparrow$ & AUPR$\uparrow$ & AUROC$\uparrow$ & AUPR$\uparrow$\\
\midrule
\multirow{3}{*}{LLaMA2-70B} 
& Probability & \ding{51} & 0.7261 & 0.6825 & 0.6482 & 0.7426 & 0.5427 & 0.8175 & 0.6390 & 0.7475 \\
& Entropy     & \ding{51} & 0.7527 & 0.7166 & 0.6763 & 0.7622 & 0.5808 & 0.8465 & 0.6699 & 0.7751 \\
& DisAAD      & \ding{51} & \textbf{0.8015} & \textbf{0.7807} & \textbf{0.7046} & \textbf{0.7874} & \textbf{0.6503} & \textbf{0.8852} & \textbf{0.7188} & \textbf{0.8178} \\
\midrule
\multirow{3}{*}{Qwen3-32B} 
& Probability & \ding{51} & 0.6608 & 0.7369 & 0.6599 & 0.8396 & 0.6852 & 0.7905 & 0.6686 & 0.7890 \\
& Entropy     & \ding{51} & 0.6724 & 0.7710 & \textbf{0.6867} & \textbf{0.8522} & \textbf{0.7127} & \textbf{0.8215} & 0.6906 & 0.8149 \\
& DisAAD      & \ding{51} & \textbf{0.7478} & \textbf{0.8310} & 0.6571 & 0.8480 & 0.6935 & 0.8125 & \textbf{0.6995} & \textbf{0.8305} \\
\midrule
\multirow{3}{*}{GPT-4} 
& Probability & \ding{51} & 0.6803 & 0.8297 & 0.6620 & 0.8701 & 0.6627 & 0.9522 & 0.6683 & 0.8840 \\
& Entropy     & \ding{51} & 0.7370 & 0.8729 & 0.6862 & 0.8758 & 0.6838 & 0.9540 & 0.7023 & 0.9009 \\
& DisAAD      & \ding{51} & \textbf{0.8078} & \textbf{0.9079} & \textbf{0.6993} & \textbf{0.8742} & \textbf{0.6893} & \textbf{0.9580} & \textbf{0.7321} & \textbf{0.9134} \\
\bottomrule
\end{tabular}}
\caption{Comparison of reliability estimation performance based on probability, entropy, and logits-based DisAAD.}
\label{ap1}
\end{table*}

As shown in Table \ref{ap1}, the proposed DisAAD outperforms both probability-based method and entropy-based method in estimating the response reliability in most cases. This superiority is evident from two key observations. First, DisAAD consistently achieves the highest average AUROC and AUPR scores across different types of target LLMs (LLaMA2-70B, Qwen3-32B, and GPT-4), indicating its robust and generalizable performance. For instance, with LLaMA2-70B as the target LLM, DisAAD's average AUROC of $0.7188$ significantly surpasses the $0.6699$ from entropy and $0.6390$ from probability. Theoretically, this advantage stems from the nature of logits as raw and unnormalized scores from the model's final layer. Unlike probabilities, which are normalized by the softmax function and thus only reflect the relative differences between scores, logits retain crucial information about the absolute magnitude of the model's conviction. By operating directly on these richer, uncompressed logit representations, DisAAD can access a more fine-grained and reliable signal of the model's true internal state of confidence, ultimately enabling a more accurate distinction between reliable and unreliable answers.

\subsection{Potential Risks}

AI safety is closely related to the reduction of LLM hallucination. These non-factual but seemingly reasonable outputs pose risks for safety-critical applications. Our DisAAD framework estimates the uncertainty of black-box LLMs through a streamlined proxy model, which helps detect illusion phenomena and thereby enhances the security of deployment. However, our proxy model relies on the quality of the dataset, which can lead to deviations in uncertainty estimation and result in the omission of illusion phenomena in critical scenarios. Additionally, the bias of discriminator may also cause the proxy model to overestimate or underestimate the response uncertainty of the black-box LLM. Therefore, users who adopt the proposed method need to be cautious when conducting adversarial distillation of the proxy model.

\end{document}